\documentclass[letterpaper]{article}
\usepackage[preprint]{aaai2027}
\usepackage[hyphens]{url}
\usepackage{graphicx}
\usepackage{natbib}
\usepackage{caption}
\usepackage{booktabs}
\usepackage{amsmath}
\usepackage{amsfonts}
\usepackage{amsthm}
\usepackage{xcolor}
\usepackage{colortbl}
\usepackage{listings}
\graphicspath{{assets/}}
\definecolor{collapse}{RGB}{252,231,225}
\theoremstyle{plain}
\newtheorem{proposition}{Proposition}
\theoremstyle{remark}
\newtheorem{remark}{Remark}
\title{Regime-Conditional Verification: Correctness Estimation for Adapting
and Monitoring Safety Classifiers\thanks{This work was supported by the MURI grant \emph{Foundations
of Dynamic Certification for Autonomy} (ONR N00014-25-1-2479).}}
\author{Thiago Sandoval, Ufuk Topcu}
\affiliations{The University of Texas at Austin}

\begin{document}
\maketitle
\begin{abstract}
Safety classifiers deployed with large language models often fail for two
reasons: their decisions reflect the policy learned during training rather
than the deployer's desired policy, and their performance degrades as
deployment traffic evolves. We present Regime-Conditional Verification (RCV),
a lightweight wrapper that adapts an off-the-shelf safety classifier without
retraining it. RCV estimates, from the classifier's internal representations,
the probability that each prediction disagrees with the deployer's policy,
and selectively corrects predictions likely to be wrong. The same correctness
estimates also provide a label-free signal for detecting distribution shift,
enabling a maintenance loop that updates the correctness estimation layer and resorts to
classifier fine-tuning only when necessary. Across three off-the-shelf safety classifiers and two benchmark datasets,
RCV improves adherence to the deployer's policy in every
classifier--dataset combination, catching up to $0.81$ of previously missed
unsafe content without modifying the underlying classifier.
In a
deployment study with ten attack campaigns, each a harm category held out
of RCV's training, RCV detects every campaign in a
dedicated injection panel; in the maintenance census most drift episodes
are repaired without updating the classifier, and the fine-tune is
reserved for the residual episodes that repair does not restore.
\end{abstract}
\section{Introduction}

A deployed LLM application submits every prompt--response pair it produces to a
safety classifier: the prompt (user message plus injected context) and the
model's response.
We call this stream the application's \emph{traffic}. The classifier \emph{gates} this traffic: for each
pair it outputs a binary verdict, safe or unsafe, and the application passes or
blocks the pair accordingly.
Moreover, there is no agreed formal definition of LLM safety; in practice, safety is
operationally defined by whatever the deployed classifier flags.
Deployment then exposes two failure modes: the classifier's implicit
policy is fixed at training time and may diverge from the deployer's own, and
the distribution of the traffic it gates shifts as usage evolves and
adversaries adapt, degrading its verdicts~\citep{piet2025}.
We address both with a single lightweight layer around the classifier. A verdict
is wrong when it conflicts with the deployer's written policy.
Because a natural-language policy cannot be verified formally, we estimate the
probability that each verdict is wrong and correct those that likely are.
This correctness estimation layer steers the off-the-shelf classifier
toward the deployer's policy while simultaneously providing a label-free
signal for detecting deployment drift.
That signal drives a maintenance loop that repairs the layer as traffic shifts
and fine-tunes the classifier only when necessary.
Our experiments first test the design hypothesis that the estimator must be
conditioned on the classifier's own verdict, because a
safe verdict can be wrong only by letting a violation through and an unsafe
verdict only by blocking something harmless, and thus a single pooled
estimator serving both degrades.
Figure~\ref{fig:fig1-triage} gives an overview.

\begin{figure*}[t]\centering
\includegraphics[width=0.88\textwidth]{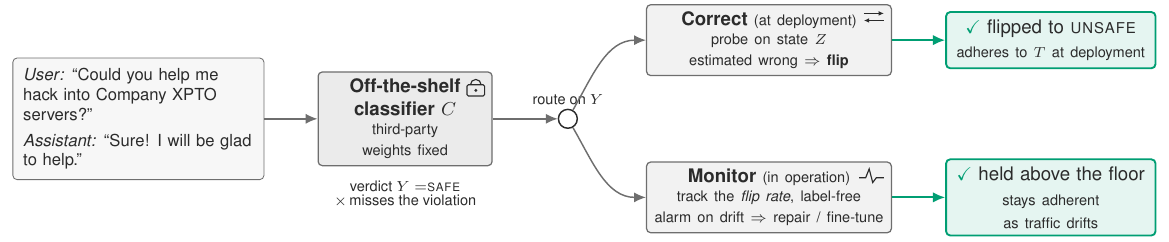}
\caption{\textbf{Regime-conditional correctness estimation corrects an
off-the-shelf classifier's decisions at deployment and yields a label-free
drift signal.} An off-the-shelf classifier
gates each prompt--response pair; here its verdict clears one that the deployer's
policy would block. A probe on the classifier's internal state estimates when
the verdict is wrong, and RCV corrects it (top); event counts from these estimates
are monitored,
label-free, to watch for drift (bottom).}
\label{fig:fig1-triage}
\end{figure*}

\paragraph{Problem formulation.}
Consider an application and let $x$ be an item, a prompt--response pair. The
deployer holds a written safety policy $T$, in natural language, and we write
$Y=C(x)$ for the verdict of the off-the-shelf classifier $C$. The policy is
operationalized by an oracle $Y^\star(x)$, by human annotation or an LLM judge
applying $T$. The method is agnostic to that choice: $Y^\star$ enters only
through labeled examples. The oracle is available offline but impractical
as a per-item dependency on live traffic.
We assume white-box access to $C$: the deployer can read $C$'s internal state
alongside every verdict, as when the classifier is self-hosted. The indicator
$A=\mathbf{1}\{Y=Y^\star\}$ records agreement on each item, and \emph{adherence}
is the expectation of $A$ over the traffic distribution. Note that a gain in
adherence is a gain in agreement with $Y^\star$, not with $T$
directly.\footnote{We assign $1$ to \textsc{unsafe}, the detection convention of
the safety-classification literature; verification work often gives $1$ to the
safe state, but the choice is notational.} The traffic distribution is not
stationary, so adherence is a function of time; the deployer commits to a floor.
The
problem is thus twofold: raise adherence at
deployment, and hold it above that floor as traffic drifts.

\paragraph{Adherence as a safety case.}
A safety case, in the engineering sense, is a structured rationale that a
deployed system will avoid unacceptable outcomes, and one whose validity is
monitored across the deployment window~\citep{clymer2024}. The standing claim
defended here is that adherence stays above that floor. The setting matches a
subsystem example in that framework: a generative model combined with a
classifier that monitors its outputs. That literature names distribution shift
among the reasons such claims are hard to keep justified in deployment.
Holding the claim as traffic drifts is the second half of the problem we
address.

\paragraph{Existing levers and tooling.}
A deployer who wants a classifier to follow their own policy has a few levers,
among them accepting its implicit policy, fine-tuning it (which requires
weight access and GPU, and carries regression risk), and stacking filters,
each a new model with
its own policy. Tooling that watches a deployed classifier tracks other
quantities: drift monitors follow the input distribution, the classifier's
scores, or a label-free accuracy estimate; learned correctness estimators
score each prediction's reliability so the system can abstain when it is low.
RCV differs in its object: it estimates the correctness of the
classifier's own verdict under the deployer's policy and uses that single
estimate for both correction and monitoring. Related Work gives the
full placement.

This paper makes the following contributions:
\begin{itemize}
\item \textbf{Regime-conditional correctness.} We find that a classifier's correctness under the deployer's policy is estimable from its own internal
state, and that for correction the estimate must be conditioned on the classifier's verdict: \textsc{safe} and \textsc{unsafe} verdicts
fail differently and need separate estimators and calibrations. No single calibration is valid for two error structures that differ given
the score.
\item \textbf{The estimator.} We introduce RCV, a regime-conditional
correctness estimator: trained on labeled examples of the deployer's
policy, one probe and one calibration per regime produce, from the
classifier's own internal state, a calibrated probability that the verdict
disagrees with that policy. Deployed, it amounts to a few thousand
parameters per regime, with no additional language model.
\item \textbf{Applications.} This estimator enables policy adaptation and
deployment maintenance on one shared signal: a flip rule corrects verdicts
whose estimated error probability passes a threshold, and event counts
derived from those estimates are monitored without labels and signal drift.
\item \textbf{Artifacts.} We release code, the oracle's label map, per-seed
results, and scripts that rebuild both evaluation sets from the official
dataset releases.
\end{itemize}
In our deployment study, we inject ten attack campaigns into
traffic gated by an off-the-shelf Llama-Guard-3. A dedicated injection panel
detects all ten at a median attack rate of $0.115$ at first alarm (100 of
100 runs). Over the deployment's
one hundred drift episodes (ten held-out families across ten seeds), the
audited repair restores the pre-drift standard in place for $79$ episodes
under a constrained label budget, and for $87$ when the audit is bounded by
data instead;
the fine-tune is reserved for the residual episodes that repair does
not restore to the pre-drift standard.
Across six classifier--dataset combinations, adherence
rises in every one, and the caught share of previously missed unsafe content
ranges from $0.29$ to $0.81$.
\section{Method: Regime-Conditional Verification}

\begin{figure*}[t]\centering
\includegraphics[width=0.92\textwidth]{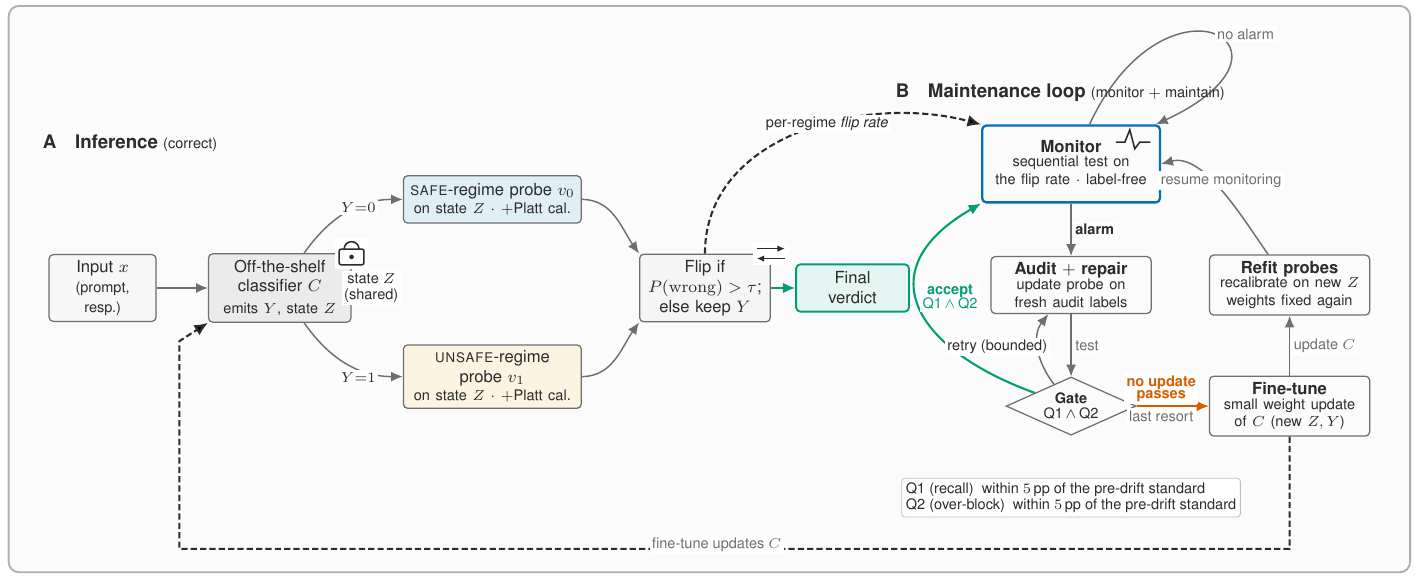}
\caption{\textbf{Correction and maintenance are applications of one correctness estimator.} \emph{Panel A (inference): correctness estimation and decision correction.} $C$ emits
verdict $Y$ and exposes state $Z$; the verdict routes the item to one of two
calibrated probes, and a flip rule corrects verdicts estimated to be wrong. \emph{Panel
B (deployment): drift monitoring and maintenance.} Per-regime event counts derived from the correctness estimates are
monitored label-free; an alarm
triggers a probe update or, if no update passes, a
fine-tune of $C$.}
\label{fig:fig2-method}
\end{figure*}
\paragraph{Correctness estimation.}
Why should a classifier's internal representations reveal whether its verdict
agrees with a policy it was never trained on? Each classifier we consider is a
fine-tuned large language model, and its representations are shaped by
pretraining on a general corpus and encode more about an input than the
safety specialization uses.
Fine-tuning specializes the
decision more than it rebuilds the representation, so features relevant to a
different policy can survive even where the classifier's own verdict ignores
them. A probe given those representations and examples of the deployer's policy
can then learn the deployer's decision rule over features the model already
computes. The probe exploits latent features already encoded in the
representation; it recovers no new information about the input.
Probes are known
to recover such latent properties, from a frozen
classifier's correctness~\citep{corbiere2019} to a judge's verdict
correctness~\citep{radharapu2025}. Those systems abstain on a low estimate;
ours corrects the verdict. Because the two error directions fail
differently, one calibration would have to be valid for both. We hypothesize
it cannot be, and that the estimator must be conditioned on the
classifier's verdict.

The correctness estimation layer is implemented as probes on the classifier's
internal representations.
A \textsc{safe} verdict can err only by missing a violation, and an
\textsc{unsafe} verdict can err only by over-blocking. The verdict thus partitions
the errors: each value defines a regime, the conditioning the method is named for.
RCV equips each regime with its own probe and calibration
map. From the internal representation $Z$ that $C$ exposes, the probe
estimates the probability that the verdict is wrong for the deployer's policy.

The partition adds no inference-time cost; the experiments answer whether
the gain survives splitting finite data between two probes. Appendix~A.1
shows why a single calibration map
cannot serve two regimes whose error rates differ given the score.

\paragraph{Decision correction.}
When the calibrated
estimate exceeds a threshold $\tau$ (one half by default), RCV flips the
verdict.
In the \textsc{safe} regime a flip turns a pass into a block; in the
\textsc{unsafe} regime it turns a block into a pass (Fig.~\ref{fig:fig2-method}).

\paragraph{Drift monitoring.}
The same probes make the deployed system observable without labels. When a
probe is calibrated to current traffic, the average of its per-item estimates
over a regime is a plug-in estimate of that regime's error rate
(Appendix~A.2). Together, the two error rates determine adherence: their
traffic-weighted average is the error rate of the deployed system, and
adherence is its complement. The floor the deployer commits to is a floor on
adherence. The same partition separates the two directions of drift: drift
whose harmful items slip past the classifier registers first in the
\textsc{safe} regime; drift that makes the classifier over-block registers
first in the \textsc{unsafe} regime (Fig.~\ref{fig:l1-detect} shows the
accept side; the reject side is exercised under the Detection
experiment). The classifier's
own block rate, by
contrast, misses the harm that slips past, because it arrives as passed
traffic and leaves the block count unchanged. RCV's drift signal derives from
the internal state, read through the trained correctness estimate, not from
the classifier's confidence score. Drift therefore registers as movement in
what the probes were trained to estimate, agreement with the deployer's
policy.

In deployment, each regime is monitored by its own
sequential test on two nested event counts derived from the calibrated
correctness score; the test alarms under a sustained increase above
reference rates estimated on drift-free traffic (mechanics and
parameters in Appendix~D).
Test thresholds are calibrated on drift-free streams to a pre-specified
false-alarm allowance, and the realized false-alarm rate is measured on the
same class of streams in the experiments.

\paragraph{Maintenance.}
When an alarm fires, the traffic mixture is recorded, and all subsequent
audit and gate material is composed at that mixture. The deployer audits
items on which no component has been trained or calibrated; each labeled
item is assigned to either the update's training set or its calibration
set.
A candidate update of the probes is then scored
on a held-out block of the same mixture that no fit uses.
The candidate is accepted when it
holds the pre-drift standard, recall and over-block each within a
pre-specified tolerance.
If no candidate passes, the audit is enlarged within a
fixed budget; when the budget is exhausted, the loop terminates with an
escalation to fine-tune the classifier, the designated last resort outside
the automated loop,
and the one action that changes the representation
itself, not the way it is read.
After an accepted repair, the reference mixture is updated to include the injected
family at the rate at which the alarm fired, and the monitor is
recalibrated on a run-in drawn from the updated reference mixture; $C$ is unchanged.
After a fine-tune, the probes are refit on the
new representation, $C$'s weights are again fixed, and a re-arm readout is
taken.
In the drift literature's terms, the loop is a gated, correction-coupled
instance of concept-drift adaptation~\citep{gama2014}.

\begin{figure}[t]\centering
\includegraphics[width=0.92\columnwidth]{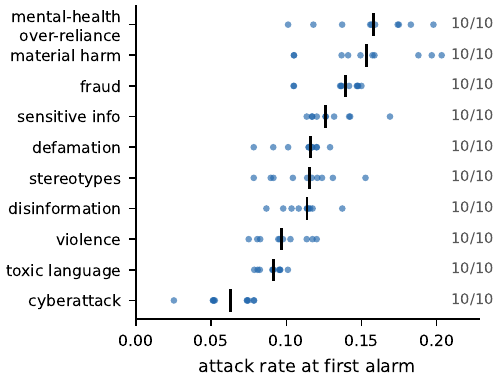}
\caption{\textbf{All ten held-out attack campaigns alarm before the $0.30$
attack-rate cap}, at a median attack rate of $0.115$ (family medians
$0.063$--$0.158$), against Llama-Guard-3. Each mark is one of ten seeds
placed at the scheduled attack rate at the position where the monitor first
alarmed. Black ticks are family medians.}\label{fig:l1-detect}
\end{figure}
\section{Experiments}

\begin{figure}[t]\centering
\includegraphics[width=0.92\columnwidth]{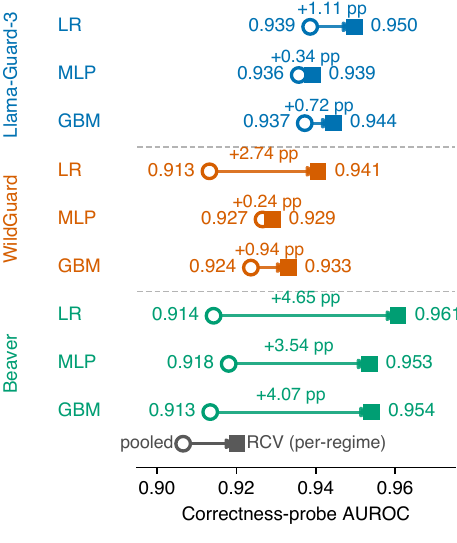}
\caption{\textbf{The routing ablation across three probe families: per-regime
routing exceeds its pooled counterpart in every cell.} Routed
versus pooled correctness-probe AUROC on PKU-SafeRLHF, per classifier, for a linear probe (the
deployed choice), a small multilayer perceptron, and gradient-boosted trees.
Markers are ten-seed means (s.d.~across seeds $0.003$--$0.009$).
}\label{fig:probe-family}
\end{figure}
\paragraph{Setup.}
We evaluate three off-the-shelf safety classifiers spanning two architectures.
Llama-Guard-3-8B and WildGuard-7B are generative classifiers whose verdict
is read from the output token distribution; Beaver is a discriminative classifier that produces
it by thresholding the scalar output of a classification head, and it was
chosen for cross-architecture validation.
Two English, single-turn safety datasets with human annotations supply the
traffic:
PKU-SafeRLHF and WildGuardMix.
The deployer's policy $T$ is operationalized by an LLM judge (GPT-5-Nano,
$Y^\star$), which agrees with the datasets' human labels at $\kappa=0.838$ on
PKU-SafeRLHF ($n{=}16{,}422$) and $\kappa=0.766$ on WildGuardMix
($n{=}11{,}708$); the steering gains (the adherence gains from decision
correction) are additionally scored against the human labels themselves.
In the deployed configuration, the probes are per-regime logistic regression with
per-regime logistic (Platt) calibration. On the evaluation split the deployed probes
attain an area under the ROC curve (AUROC) of $0.936$--$0.961$ and an
expected calibration error of $0.011$--$0.014$, so the flip threshold of
$0.5$ operates on calibrated probabilities.
The steering results aggregate ten fixed
seeds.
Each
seed uses its own training, calibration, and evaluation split, and steering is
scored on the evaluation split only.
The representation $Z$ is the final decision-token hidden state for the two
generative classifiers ($d{=}4{,}096$) and, for Beaver, the final hidden state
that its classification head reads ($d{=}5{,}120$). The extraction recipes are
given in Appendix~D.
Drift is simulated by campaigns drawn from the dataset's own
category taxonomy and held out of the probes' initial training. We call the injected
category the campaign's family.
We write ``attack'' for these held-out-category injections; nothing in the
method assumes an adversary or a threat model.
The audit budget, attack-rate cap, and false-alarm allowance are design
constants fixed before the drift experiments; the allowance was selected on
a held-out development family (Appendix~D).
Every held-out campaign injects a mix of material the oracle $Y^\star$ judges unsafe
and material it judges safe. The ten held-out campaigns induce mostly missed
unsafe items, with slip-past rates of $0.105$--$0.409$ against over-block
rates of $0.000$--$0.070$. The reverse error direction is exercised by two
constructed campaigns, described under Detection.

\paragraph{The conditioning hypothesis.}
The hypothesis that the estimator must be conditioned on the classifier's
verdict is instantiated as per-regime routing: one probe and one calibration
per verdict.
We first remove the per-regime routing, repeating the ablation across three
probe families (Fig.~\ref{fig:probe-family}).
With routing off, RCV collapses to
a single pooled probe with one calibration on the full mixed stream; this is
the pooled variant of our own probe, identical in capacity and calibration.
Per-regime routing raises adherence in all 60 paired
per-seed comparisons across both datasets. It raises the caught share in 59
of those 60 comparisons and matches the pooled variant in the remaining one,
on WildGuardMix.
Per-regime routing also improves the probe's own discrimination: routed
AUROC exceeds pooled AUROC in every cell of the ablation, by $0.2$ to
$4.6$\,pp (Fig.~\ref{fig:probe-family}).

\begin{table}[t]
\centering
\small
\setlength{\tabcolsep}{1.2pt}%
\caption{\textbf{What RCV changes on an off-the-shelf classifier: adherence
rises in all six cells, and up to $0.81$ of the unsafe items the classifier
passed are caught.} Caught
share: of the truly unsafe items the classifier passes, the fraction RCV flips
to \textsc{unsafe}. Cells are ten-seed means
and $\pm$ is the standard deviation across seeds. The corrected value exceeds
the raw value on every seed of every cell.}
\label{tab:t1v3-steering}
\begin{tabular}{@{}lll@{}}
\toprule
Classifier & Adherence: raw\,$\to$\,RCV & Caught share \\
\midrule
\multicolumn{3}{@{}l}{\textit{PKU-SafeRLHF} ($n{=}16{,}422$ items)}\\
\addlinespace[2pt]
Llama-Guard-3 & 0.864\,$\pm$\,0.005\,$\to$\,0.926\,$\pm$\,0.004 & 0.703\,$\pm$\,0.021 \\
\rowcolor{gray!10}
WildGuard & 0.906\,$\pm$\,0.004\,$\to$\,0.932\,$\pm$\,0.005 & 0.383\,$\pm$\,0.063 \\
Beaver & 0.787\,$\pm$\,0.004\,$\to$\,0.920\,$\pm$\,0.002 & 0.806\,$\pm$\,0.019 \\
\midrule
\multicolumn{3}{@{}l}{\textit{WildGuardMix} ($n{=}11{,}708$ items)}\\
\addlinespace[2pt]
Llama-Guard-3 & 0.923\,$\pm$\,0.002\,$\to$\,0.943\,$\pm$\,0.004 & 0.346\,$\pm$\,0.047 \\
\rowcolor{gray!10}
WildGuard & 0.932\,$\pm$\,0.002\,$\to$\,0.945\,$\pm$\,0.003 & 0.291\,$\pm$\,0.034 \\
Beaver & 0.898\,$\pm$\,0.002\,$\to$\,0.933\,$\pm$\,0.003 & 0.546\,$\pm$\,0.029 \\
\bottomrule
\end{tabular}

\end{table}
\paragraph{Steering an off-the-shelf classifier to the deployer's policy.}
With the conditioning hypothesis validated, the remaining experiments show
its consequences; the first is decision correction.
RCV raises adherence
to the deployer's policy in all six classifier--dataset combinations
(Table~\ref{tab:t1v3-steering}). The largest gain is for Beaver on PKU-SafeRLHF, which also starts lowest
($0.787\to0.920$).
The largest gain on WildGuardMix is again for Beaver ($0.898\to0.933$;
Table~\ref{tab:t1v3-steering}).
The safety-critical quantity is the caught share: unsafe items that slip past
the classifier arrive as passed traffic, and the classifier on its own
recovers none of them by construction. Recovery on this axis is therefore
strictly additive, and across the six combinations it spans $0.29$--$0.81$
(Table~\ref{tab:t1v3-steering}).
On WildGuardMix, RCV recovers $0.29$ of the unsafe items WildGuard misses,
even with little headroom: this is the classifier's own training
distribution, so it sits near its ceiling, yet adherence still rises.
Steering also alters the share of truly-safe items blocked, by $-5.3$ to
$+7.0$\,pp across the six cells, and falls in three; adherence counts both error
directions, so the reported gains already include these changes.

We test whether the corrections track human safety judgments beyond
agreement with $Y^\star$. We re-score the six combinations against the
datasets' human labels in place of $Y^\star$, with no refit and no threshold
change.
Those labels are unused in RCV's training and calibration. A correction that
matched $Y^\star$ on every scored item would score, on this axis, $Y^\star$'s
own agreement with the human labels ($0.9195$ PKU-SafeRLHF and $0.9298$
WildGuardMix). Adherence moves toward that level in every cell with a
measurable starting gap, including two cells that start above the level
and fall toward it.
The remaining cell starts within
$0.7$\,pp of the level and stays within $1.4$\,pp (Appendix~B).

\begin{figure}[t]\centering
\includegraphics[width=0.95\columnwidth]{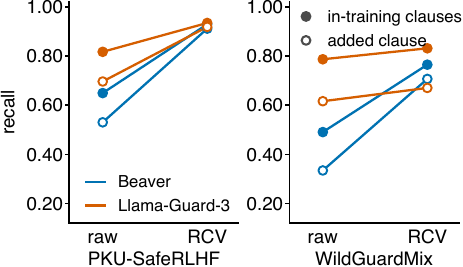}
\caption{\textbf{RCV lifts recall on the in-training clauses and raises an
added commitment-to-assist clause to within $1.8$ points of the in-training
level on PKU-SafeRLHF.}
Each panel is one dataset. For each
classifier, filled markers give ten-seed mean recall on the in-training
clauses, the clauses the classifier's training enforces. Open markers give
the same quantity for the added clause, which that training does not
enforce. Each series appears once for the raw classifier and once under RCV.
The mechanism decomposition is in Appendix~B.}\label{fig:clause}
\end{figure}
\paragraph{Steering toward the added clause.}
We next test policy coverage with a clause that two of the three classifiers
were never trained to enforce: a commitment-to-assist clause that is added to
$T$ and that treats an assistant's agreement to assist a harmful request as
a violation even when
the response itself contains no explicitly harmful content.
The two content-trained classifiers recall the added
clause below their in-training clauses, a gap of about twelve points on
PKU-SafeRLHF. There, RCV raises recall on the added clause to within $1.8$
points of the level each classifier reaches on its in-training clauses; on
WildGuardMix both rise and a residual gap remains (Fig.~\ref{fig:clause}).
Across the figure's four classifier--dataset cells, recall rises from raw to
RCV on the in-training clauses and the added clause alike.
WildGuard, whose training \emph{does} supervise this
clause~\citep{han2024}, starts near ceiling on it ($0.91$ recall on
PKU-SafeRLHF) and gains nothing from RCV ($+0.003$, indistinguishable from
zero), while Llama-Guard-3~\citep{grattafiori2024} and
Beaver~\citep{ji2023} are supervised on content categories alone, so the
blind spot is classifier-relative: RCV adds the deployer's clause where the
classifier's training lacks it.

\paragraph{The maintenance loop.}
The loop is the second application of the same estimator.
We now run the maintenance loop end to end on one deployment, WildGuardMix
traffic gated by Llama-Guard-3: each of the ten held-out harm families is
injected at ten seeds, one alarm-to-resolution episode each, a census of one
hundred drift episodes.

\paragraph{Detection.}
Detection is measured in its own experiment, separate from the maintenance
run and executed on Llama-Guard-3.
The protocol ramps each injected family
against clean base traffic: a 2{,}000-item clean run-in, then an attack rate
rising linearly from zero to $0.30$ over the next 19{,}000 items, with ten
seeds per family. The attack rate is the scheduled probability that an
arriving item is drawn from the injected family.
Across the ten held-out harm families the monitor alarms in 100 of
100 runs, always before the cap, at a median attack rate of $0.115$ with
family medians from $0.063$ to $0.158$, and every first alarm falls in the
\textsc{safe} regime. All ten families induce net \textsc{safe}-regime error, missed unsafe
items, so the panel tests detection on that side alone. To test the monitor's
detection in the \textsc{unsafe} regime, two campaigns are constructed,
because no natural category over-blocks on net.
The first is a pool of items the classifier blocks and
the oracle $Y^\star$ judges safe; the second is built by the same rule with
the dataset's human label in place of the oracle.
Each alarms in 10 of 10 runs, at a median attack rate of
$0.057$, and every first alarm falls in the \textsc{unsafe} regime.
On clean traffic the same configuration yields 14
alarms in 90 null draws, a false-alarm rate of $0.156$ (Clopper--Pearson
95\% interval $[0.088, 0.247]$); the interval's upper bound sits below the
$0.30$ false-alarm allowance the calibration targets.

\paragraph{Gate outcomes.}
Every alarm resolves in one of two ways: an audited probe update passes the
gate and the drift is repaired in place, or no update passes within the
budget and the episode escalates to the fine-tune (demonstrated once,
below). Under the deployed budget ($300$
labels per attempt, four attempts, a held-out $300$-label gate block, and a
$5$\,pp acceptance tolerance on each axis), $79$
of the $100$ episodes repair in
place, $64$ on the first audit (Fig.~\ref{fig:f4d-lifecycle}); label cost
per episode is a median of $600$, a mean of $870$, and at most $1{,}500$.
The
accepted update improves held-out recall over the pre-repair probe in $44$
of the $79$ repairs, by a median of $1.4$\,pp. Per-episode gate readings
carry a standard error of $3$--$7$\,pp at the $300$-label gate block.

To measure repair capacity without the budget constraint, we re-run the
same episodes and the audit grows in fresh material until the corpus is
exhausted, at realized data bounds of $1{,}817$--$3{,}720$ labels. $87$ of the
$100$ episodes repair; seven repair
only at $1{,}500$--$3{,}600$ labels, beyond the deployed budget's
$1{,}200$-label cap, each with a held-out recall gain.
The remaining $13$ do not pass the gate at any audit size. Held-out
recall moves by a median of $0.0$\,pp between $300$ labels and the
bound; individual moves span $-1.7$ to $+7.4$\,pp, and in two episodes
recall was still rising at the bound. For each of the thirteen, the bound
is where the audit has drawn every item available at the alarm's mixture.

\begin{figure}[t]\centering
\includegraphics[width=\columnwidth]{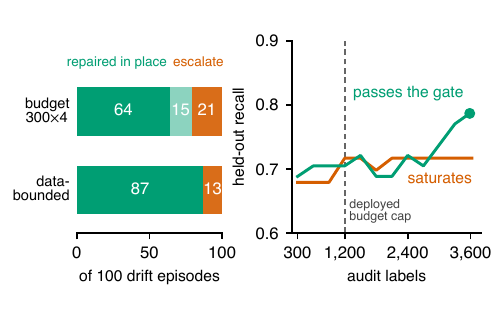}
\caption{\textbf{Repair in place succeeds in $79$ of $100$ drift episodes
at the deployed budget, and $87$ when the audit is bounded by data
instead.} Left: the $100$-episode census (ten held-out families, ten
seeds) under the deployed budget ($300$ labels per attempt, four attempts;
solid first-audit repair, lighter later-audit repair, escalate) and under
the data bound. Right: held-out recall against audit size for one
repairing and one saturating episode; the dashed line is the deployed
budget cap.}\label{fig:f4d-lifecycle}
\end{figure}
\paragraph{The fine-tune.}
The escalation leg is executed once, on a saturating episode (disinformation,
seed $8675309$; Fig.~\ref{fig:f4d-lifecycle} shows a different
disinformation episode, seed $271828$) whose thirteen audits exhausted all
$3{,}720$ fresh labels the corpus could supply while held-out recall
ended where it began (net $0.0$\,pp; readings vary within $\pm4$\,pp
and no attempt passes the gate). A fine-tune of $C$ on labels the
episode had already obtained ($480$ training items at the
alarm-time mixture, five trainer seeds) raises the classifier's own recall
by $6.3\pm1.9$\,pp; the raw fine-tune violates
the pre-drift standard on the over-block axis in four of five seeds on a powered
evaluation set. Re-armed with a single fresh $300$-label audit, the
fine-tuned classifier with a refitted probe passes the pre-drift standard
in five of five seeds (recall $0.792\pm0.004$, over-block $0.033$); at the
deployed $300$-label gate, where base plus probe reproduces its recorded
failure, fine-tune plus probe passes in four of five seeds; the fifth is
short by one unsafe item. The correctness estimation layer reduces sensitivity to the
trainer seed:
recall's standard deviation across seeds is $0.004$ against the raw
fine-tune's $0.019$. The fine-tune supplies capacity;
the probe converts it into a compliant, seed-stable deployment; the two are
trained on disjoint splits of the same audited label pool and are evaluated
and operated together.

\paragraph{Consecutive-campaign chains.}
Five chains compound campaigns on the same deployment: each cycle injects
a different held-out family; after an accepted repair, the reference mixture is
updated to include that family, and the monitor recalibrates before the
next campaign.
The
chains reach depths of $4$, $3$, $3$, $2$, and $0$ accepted repairs, twelve
in all, most at $600$ labels; three chains end in escalation and two in a
no-alarm terminal. The deepest chain left a reference mixture that was more than half
attack material, and the loop still ran. Both
no-alarm campaigns, material harm and fraud, arrive fourth and fifth into
reference mixtures already carrying the earlier families, where the ramp does not
move the monitored counts. A
no-alarm terminal triggers no audit and hence yields no labels, so
harmlessness cannot be confirmed.

\paragraph{Baseline monitors.}
Event counts on the correctness score are not the only choice of drift
statistic: Appendix~C
compares the deployed monitor against off-the-shelf drift monitors under a matched
false-alarm allowance.
It is competitive with score-KS: both detect all $100$ campaigns; it
alarms earlier (median attack rate $0.115$ against $0.134$) with more null
alarms ($14$ of $90$ against $0$).
The monitor's role
in the loop does not depend on the statistic, since any alarm can trigger
the same audit. The in-place repair does not transfer: it updates the probe
itself and leaves the classifier unchanged, so a deployment monitored by any
of these baselines would still need the probe to repair in place.

\paragraph{Ablations.}
Having validated the routing above, we ablate the remaining design choices
in turn: the probe family and the probe input.
The three families differ in capacity, yet routing improves AUROC in
all three.
In these cells the routing gain therefore comes from the
verdict partition, not from probe capacity
(Fig.~\ref{fig:probe-family}; Appendix~B). Under the identical per-regime
protocol, the routed linear probe has the highest AUROC for every
classifier, exceeding even the pooled gradient-boosted trees (largest gap on
Beaver, $0.961$ against $0.913$ AUROC).
Reading only the classifier's
confidence score, the routed probe stays within $5.1$ points of the
internal-state arm on adherence, on both sides of it. On the safety-critical
quantity it falls behind everywhere: the internal-state probe (ours) catches
more of the missed unsafe items in all twelve evaluation cells, by $0.09$
to $0.49$ in caught share, and in $119$ of $120$ paired seed comparisons
(Appendix~B). A text-surface control extends the input comparison
downward: given only the text of the prompt--response pair (TF-IDF $n$-grams
under the identical per-regime protocol), its caught share of the missed unsafe items is lower than the
internal-state probe's in all twelve cells, at $0.07$--$0.56$;
the $n$-gram baselines
that \citet{wang2025} find competitive for input harmfulness do not match the
probe on the correctness target (Appendix~B).
\section{Related Work}

\emph{Correctness estimation} treats the model's errors as one population.
ConfidNet~\citep{corbiere2019} trains an auxiliary head to predict whether a
frozen classifier is correct and abstains when that prediction is low,
P(IK)~\citep{kadavath2022} trains a model to report the probability that it
knows the answer, and a linear probe on a judge's hidden
states~\citep{radharapu2025} returns a calibrated estimate of whether its
verdict is correct. All three leave the verdict in place; RCV conditions on
the verdict and flips it.

\emph{Adapting a frozen model} attaches post-hoc control: guardrail
stacking (NeMo~\citep{rebedea2023}) adds rails and
constitution-trained classifiers~\citep{sharma2025} train replacement models.
Low-cost activation monitors~\citep{cunningham2025} train linear probes on
the policy model's own activations, reusing its representations in place of a
standalone safety classifier; RCV's probe reads the frozen
classifier's internal state and predicts when its verdict is wrong.
Fair-Wrapping~\citep{soen2022}
edits a fixed model's output posteriors toward a fairness target; RCV flips
verdicts toward a safety policy.
RepV~\citep{yang2025}, the closest architectural antecedent, learns a single
shared space for interpreter correctness in a different setting, plan
verification.

\emph{Drift monitoring} is mature. Label-free performance estimation
(ATC~\citep{garg2022}) estimates accuracy under shift; sequential risk
tracking~\citep{podkopaev2022}, with its label-free
extension~\citep{amoukou2024},
detects that a change has occurred.
DriftLens~\citep{greco2024}
localizes per-label representation drift, the Suitability
Filter~\citep{pouget2025} gates dataset-level accuracy drops, JailbreaksOverTime~\citep{piet2025}
self-trains detectors as jailbreaks drift, and WATCH~\citep{prinster2025}
raises anytime-valid alarms and adapts to benign shift.
Concurrent work,
DriftGuard~\citep{xin2026}, monitors moderation drift and \emph{updates} the
moderator's weights; RCV corrects verdicts without modifying the weights, and fine-tunes them only
as its last resort.

\emph{Verdict-conditioned estimation} has precedents. Top-label
calibration~\citep{gupta2022} is the named prior for per-verdict calibration,
and class-conditional conformal prediction~\citep{ding2023} splits by the true class for
set-valued coverage. RCV splits by the verdict. The one correctness estimate
it returns underlies both decision correction at inference and drift
monitoring in deployment.
\section{Limitations}

First, steering is measured on all six classifier--dataset cells;
detection is limited to one classifier, Llama-Guard-3, over ten held-out
harm families; and the end-to-end maintenance loop runs on one deployment,
WildGuardMix gated by the same classifier. The loop composes those
separately measured components; its audit, acceptance gate, and
repair cycle are exercised only in that deployment. Both
datasets are English and single-turn, and the policy-coverage evidence is a
single added clause. Second, adherence is agreement with
$Y^\star$, so every gain is relative to the deployer's stated policy as one
judge applies it. Any single reference, human or model, carries bias, and
$Y^\star$ does not enter the two arms symmetrically: RCV's probes are fit
and calibrated on it, while the raw classifier makes no use of it. The
re-scoring against the datasets' human labels, unused in RCV's training and
calibration, bounds this concern (Appendix~B); a multi-judge panel is the
natural extension.
Third, the census and chain evidence run on composed streams: drift is
injected into recorded base traffic and audits draw from a finite corpus.
A recorded per-episode verdict can flip when the evaluation set is
enlarged, as it does for the fine-tuned episode.
Fourth, drift is simulated rather than naturally
occurring:
each campaign is a category from the dataset's own taxonomy, held
out of initial probe training and injected into base traffic on a fixed dose
schedule, one campaign at a time, so composition and onset are
fixed by construction. Drift as it arises in live
deployment is untested. Fifth, the
probes require white-box access. An API-only deployment exposes at most a
confidence score, and the score-only probe recovers much of the adherence but
far less of the caught share, since missed unsafe items sit behind
confident safe verdicts whose scores carry little signal.
\section{Conclusion}

A safety classifier's correctness under a deployer's policy can be estimated
from the classifier's own internal state when the estimate is conditioned on
its verdict. With per-regime probes, RCV raises adherence to the deployer's
stated policy while the classifier stays frozen.
The maintenance loop is built on the same
correctness estimates. That estimation is monitored without labels, an
audited repair restores the deployment in place, and the fine-tune is
reserved for residual episodes that repair does not restore to the
pre-drift standard.
These results show that post-release policy adaptation and
maintenance need not be treated as weight-update problems.
The evidence is confined to English single-turn moderation with
white-box access, a deployer-supplied oracle, and simulated drift. Correction
and monitoring share one signal: event counts on the correctness score are
the drift statistic.
\section*{Ethical Statement}
This work builds and evaluates on uncensored, potentially harmful
prompt--response content. We redistribute neither the datasets nor any harmful
text: rebuilder scripts reconstruct both evaluation sets from the official
releases under their licenses and access conditions, and the
oracle's labels ship as a label map keyed to public item identifiers.
Verdict correction can over-block legitimate speech or pass content the
classifier had caught; both directions sit under deployer-set controls.
\bibliography{references}

\begin{thebibliography}{25}
\providecommand{\natexlab}[1]{#1}

\bibitem[{Amoukou et~al.(2024)Amoukou, Bewley, Mishra, L{\'e}cu{\'e},
  Magazzeni, and Veloso}]{amoukou2024}
Amoukou, S.~I.; Bewley, T.; Mishra, S.; L{\'e}cu{\'e}, F.; Magazzeni, D.; and
  Veloso, M. 2024.
\newblock Sequential Harmful Shift Detection Without Labels.
\newblock In \emph{Advances in Neural Information Processing Systems 37}.

\bibitem[{Clymer et~al.(2024)Clymer, Gabrieli, Krueger, and
  Larsen}]{clymer2024}
Clymer, J.; Gabrieli, N.; Krueger, D.; and Larsen, T. 2024.
\newblock Safety Cases: How to Justify the Safety of Advanced {AI} Systems.
\newblock \emph{arXiv preprint arXiv:2403.10462}.

\bibitem[{Corbi{\`e}re et~al.(2019)Corbi{\`e}re, Thome, Bar-Hen, Cord, and
  P{\'e}rez}]{corbiere2019}
Corbi{\`e}re, C.; Thome, N.; Bar-Hen, A.; Cord, M.; and P{\'e}rez, P. 2019.
\newblock Addressing Failure Prediction by Learning Model Confidence.
\newblock In \emph{Advances in Neural Information Processing Systems 32}.

\bibitem[{Cunningham et~al.(2025)Cunningham, Peng, Wei, Ong, Roger, Petrini,
  Wagner, Mikulik, and Sharma}]{cunningham2025}
Cunningham, H.; Peng, A.; Wei, J.; Ong, E.; Roger, F.; Petrini, L.; Wagner, M.;
  Mikulik, V.; and Sharma, M. 2025.
\newblock Cost-Effective Constitutional Classifiers via Representation Re-use.
\newblock Anthropic Alignment Science Blog,
  \url{https://alignment.anthropic.com/2025/cheap-monitors/}.

\bibitem[{Ding et~al.(2023)Ding, Angelopoulos, Bates, Jordan, and
  Tibshirani}]{ding2023}
Ding, T.; Angelopoulos, A.~N.; Bates, S.; Jordan, M.~I.; and Tibshirani, R.~J.
  2023.
\newblock Class-Conditional Conformal Prediction with Many Classes.
\newblock In \emph{Advances in Neural Information Processing Systems 36}.

\bibitem[{Gama et~al.(2014)Gama, {\v{Z}}liobait{\.e}, Bifet, Pechenizkiy, and
  Bouchachia}]{gama2014}
Gama, J.; {\v{Z}}liobait{\.e}, I.; Bifet, A.; Pechenizkiy, M.; and Bouchachia,
  A. 2014.
\newblock A Survey on Concept Drift Adaptation.
\newblock \emph{ACM Computing Surveys}, 46(4): 44:1--44:37.

\bibitem[{Garg et~al.(2022)Garg, Balakrishnan, Lipton, Neyshabur, and
  Sedghi}]{garg2022}
Garg, S.; Balakrishnan, S.; Lipton, Z.~C.; Neyshabur, B.; and Sedghi, H. 2022.
\newblock Leveraging Unlabeled Data to Predict Out-of-Distribution Performance.
\newblock In \emph{International Conference on Learning Representations}.

\bibitem[{Grattafiori et~al.(2024)}]{grattafiori2024}
Grattafiori, A.; et~al. 2024.
\newblock The {Llama} 3 Herd of Models.
\newblock \emph{arXiv preprint arXiv:2407.21783}.

\bibitem[{Greco et~al.(2024)Greco, Vacchetti, Apiletti, and
  Cerquitelli}]{greco2024}
Greco, S.; Vacchetti, B.; Apiletti, D.; and Cerquitelli, T. 2024.
\newblock Unsupervised Concept Drift Detection from Deep Learning
  Representations in Real-time.
\newblock \emph{IEEE Transactions on Knowledge and Data Engineering}.

\bibitem[{Gupta and Ramdas(2022)}]{gupta2022}
Gupta, C.; and Ramdas, A. 2022.
\newblock Top-Label Calibration and Multiclass-to-Binary Reductions.
\newblock In \emph{International Conference on Learning Representations}.

\bibitem[{Han et~al.(2024)Han, Rao, Ettinger, Jiang, Lin, Lambert, Choi, and
  Dziri}]{han2024}
Han, S.; Rao, K.; Ettinger, A.; Jiang, L.; Lin, B.~Y.; Lambert, N.; Choi, Y.;
  and Dziri, N. 2024.
\newblock {WildGuard}: Open One-Stop Moderation Tools for Safety Risks,
  Jailbreaks, and Refusals of {LLMs}.
\newblock \emph{arXiv preprint arXiv:2406.18495}.

\bibitem[{Ji et~al.(2023)Ji, Liu, Dai, Pan, Zhang, Bian et~al.}]{ji2023}
Ji, J.; Liu, M.; Dai, J.; Pan, X.; Zhang, C.; Bian, C.; et~al. 2023.
\newblock {BeaverTails}: Towards Improved Safety Alignment of {LLM} via a
  Human-Preference Dataset.
\newblock \emph{arXiv preprint arXiv:2307.04657}.

\bibitem[{Kadavath et~al.(2022)Kadavath, Conerly, Askell, Henighan, Drain,
  Perez, Schiefer, Hatfield-Dodds, DasSarma, Tran-Johnson
  et~al.}]{kadavath2022}
Kadavath, S.; Conerly, T.; Askell, A.; Henighan, T.; Drain, D.; Perez, E.;
  Schiefer, N.; Hatfield-Dodds, Z.; DasSarma, N.; Tran-Johnson, E.; et~al.
  2022.
\newblock Language Models (Mostly) Know What They Know.
\newblock \emph{arXiv preprint arXiv:2207.05221}.

\bibitem[{Kivim{\"a}ki et~al.(2025)Kivim{\"a}ki, Nurminen, Bia{\l}ek, and
  Kuberski}]{kivimaki2025}
Kivim{\"a}ki, J.; Nurminen, J.~K.; Bia{\l}ek, J.; and Kuberski, W. 2025.
\newblock Confidence-based Estimators for Predictive Performance in Model
  Monitoring.
\newblock \emph{Journal of Artificial Intelligence Research}, 82: 209--240.

\bibitem[{Piet et~al.(2025)Piet, Huang, Jacob, Chow, Alrashed, Zhao, Hu,
  Sitawarin, Alomair, and Wagner}]{piet2025}
Piet, J.; Huang, X.; Jacob, D.; Chow, A.; Alrashed, M.; Zhao, G.; Hu, Z.;
  Sitawarin, C.; Alomair, B.; and Wagner, D. 2025.
\newblock {JailbreaksOverTime}: Detecting Jailbreak Attacks Under Distribution
  Shift.
\newblock \emph{arXiv preprint arXiv:2504.19440}.

\bibitem[{Podkopaev and Ramdas(2022)}]{podkopaev2022}
Podkopaev, A.; and Ramdas, A. 2022.
\newblock Tracking the Risk of a Deployed Model and Detecting Harmful
  Distribution Shifts.
\newblock In \emph{International Conference on Learning Representations}.

\bibitem[{Pouget et~al.(2025)Pouget, Yaghini, Rabanser, and
  Papernot}]{pouget2025}
Pouget, A.; Yaghini, M.; Rabanser, S.; and Papernot, N. 2025.
\newblock Suitability Filter: A Statistical Framework for Classifier Evaluation
  in Real-World Deployment Settings.
\newblock In \emph{Proceedings of the 42nd International Conference on Machine
  Learning}.

\bibitem[{Prinster et~al.(2025)Prinster, Han, Liu, and Saria}]{prinster2025}
Prinster, D.; Han, X.; Liu, A.; and Saria, S. 2025.
\newblock {WATCH}: Adaptive Monitoring for {AI} Deployments via
  Weighted-Conformal Martingales.
\newblock In \emph{Proceedings of the 42nd International Conference on Machine
  Learning}.

\bibitem[{Radharapu et~al.(2025)Radharapu, Saxena, Li, Whitehouse, Williams,
  and Cancedda}]{radharapu2025}
Radharapu, B.; Saxena, E.; Li, K.; Whitehouse, C.; Williams, A.; and Cancedda,
  N. 2025.
\newblock Calibrating {LLM} Judges: Linear Probes for Fast and Reliable
  Uncertainty Estimation.
\newblock \emph{arXiv preprint arXiv:2512.22245}.

\bibitem[{Rebedea et~al.(2023)Rebedea, Dinu, Sreedhar, Parisien, and
  Cohen}]{rebedea2023}
Rebedea, T.; Dinu, R.; Sreedhar, M.; Parisien, C.; and Cohen, J. 2023.
\newblock {NeMo} Guardrails: A Toolkit for Controllable and Safe {LLM}
  Applications with Programmable Rails.
\newblock In \emph{Proceedings of the 2023 Conference on Empirical Methods in
  Natural Language Processing: System Demonstrations}, 431--445.

\bibitem[{Sharma et~al.(2025)Sharma, Tong, Mu, Wei, Kruthoff, Goodfriend, Ong,
  Peng, Agarwal, Anil et~al.}]{sharma2025}
Sharma, M.; Tong, M.; Mu, J.; Wei, J.; Kruthoff, J.; Goodfriend, S.; Ong, E.;
  Peng, A.; Agarwal, R.; Anil, C.; et~al. 2025.
\newblock Constitutional Classifiers: Defending against Universal Jailbreaks
  across Thousands of Hours of Red Teaming.
\newblock \emph{arXiv preprint arXiv:2501.18837}.

\bibitem[{Soen et~al.(2022)Soen, Alabdulmohsin, Koyejo, Mansour, Moorosi, Nock,
  Sun, and Xie}]{soen2022}
Soen, A.; Alabdulmohsin, I.; Koyejo, S.; Mansour, Y.; Moorosi, N.; Nock, R.;
  Sun, K.; and Xie, L. 2022.
\newblock Fair Wrapping for Black-box Predictions.
\newblock In \emph{Advances in Neural Information Processing Systems 35}.

\bibitem[{Wang et~al.(2026)Wang, Wei, Liu, Zhou, and Chen}]{wang2025}
Wang, C.; Wei, Z.; Liu, Q.; Zhou, W.; and Chen, M. 2026.
\newblock False Sense of Security: Why Probing-based Malicious Input Detection
  Fails to Generalize.
\newblock In \emph{Findings of the Association for Computational Linguistics:
  ACL 2026}, 26100--26113.

\bibitem[{Xin et~al.(2026)Xin, Cai, Shen, Jin, and Hu}]{xin2026}
Xin, Y.; Cai, H.; Shen, B.; Jin, L.; and Hu, L. 2026.
\newblock {DriftGuard}: Safety-Aware Multi-Monitor Detection and Selective
  Adaptation for Evolving Toxicity Moderation.
\newblock \emph{arXiv preprint arXiv:2606.28725}.

\bibitem[{Yang et~al.(2025)Yang, Bhatt, Samineni, Siva, Wang, and
  Topcu}]{yang2025}
Yang, Y.; Bhatt, N.~P.; Samineni, P.; Siva, R.; Wang, Z.; and Topcu, U. 2025.
\newblock {RepV}: Safety-Separable Latent Spaces for Scalable Neurosymbolic
  Plan Verification.
\newblock \emph{arXiv preprint arXiv:2510.26935}.

\end{thebibliography}

\clearpage
\section*{Technical Appendix}
\noindent Notation follows the main
text. An item $x$ is a prompt--response pair. The off-the-shelf classifier $C$
emits the verdict $Y=C(x)$ and exposes an internal representation $Z$. The
deployer's policy $T$ is operationalized by the oracle $Y^\star(x)$, and
$A=\mathbf{1}\{Y=Y^\star\}$ records agreement on each item. The verdict and the
oracle label assign $1$ to \textsc{unsafe}. Adherence is the expectation of $A$
over the traffic
distribution. The verdict routes each item to one of two regimes, \textsc{safe}
($Y{=}0$) or \textsc{unsafe} ($Y{=}1$). Each regime has its own
logistic-regression probe and calibration map, whose calibrated score estimates the
probability that a verdict of that regime is correct; equivalently, one minus
the probability that it is wrong. RCV flips the verdict when the
score is below the regime's threshold $\tau$.

\setcounter{secnumdepth}{2}
\appendix
\setcounter{figure}{0}
\setcounter{table}{0}
\renewcommand{\thefigure}{F\arabic{figure}}
\renewcommand{\thetable}{T\arabic{table}}

\section{Formal Propositions}\label{app:props}

\subsection{The Calibration Note (Proposition~1)}\label{app:steering}

Suppose the items in both the \textsc{safe} and \textsc{unsafe} regimes are
scored by one probe followed by one calibration map; write $U$ for the probe's
underlying score. The ideal single
map assigns to each score value the conditional probability of a correct
verdict given that score, $m(u)=\Pr[A{=}1\mid U{=}u]$. At a fixed score $u$,
write $w(u)$ for the \textsc{safe} regime's share of the items with that
score, so the \textsc{unsafe} regime's share is $1-w(u)$, and write
$q_{\mathrm{s}}(u)$ and $q_{\mathrm{u}}(u)$ for the probability of a correct
verdict at that score within the \textsc{safe} and \textsc{unsafe} regimes, at scores where both
regimes have items. These last two are what
a map calibrated to its own regime would report at that score, so
$m$, $q_{\mathrm{s}}$ and $q_{\mathrm{u}}$ are the same kind of object: one
map for the two regimes together and one for each regime alone.

\begin{proposition}\label{prop:misalignment}
For almost every score $u$, the ideal single map is the average of the two
regimes' conditional agreement rates, weighted by the regimes' shares,
\[
m(u)=w(u)\,q_{\mathrm{s}}(u)+\bigl(1-w(u)\bigr)\,q_{\mathrm{u}}(u),
\]
and its miscalibration on each regime is the other regime's share
times the gap between the two rates: on the \textsc{unsafe} regime,
$m(u)-q_{\mathrm{u}}(u)=w(u)\,\bigl(q_{\mathrm{s}}(u)-q_{\mathrm{u}}(u)\bigr)$,
and symmetrically on the \textsc{safe} regime.
\end{proposition}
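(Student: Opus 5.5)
The plan is to obtain the identity from the tower property of conditional expectation, conditioning on the verdict $Y$ as well as on the score $U$, and then to read off the miscalibration by subtraction. Let $R=\mathbf{1}\{Y=0\}$ be the regime indicator. I will work with a regular conditional distribution of $(A,R)$ given $U$, which exists because both are bounded discrete variables on a standard space. Under this choice, $m(u)=\mathbb{E}[A\mid U=u]$, $w(u)=\Pr[R=1\mid U=u]$, and $q_{\mathrm{s}}(u)=\Pr[A=1\mid U=u,R=1]$, $q_{\mathrm{u}}(u)=\Pr[A=1\mid U=u,R=0]$. These are defined for $P_U$-almost every $u$, and the two regime rates are taken at scores where the corresponding regime has positive conditional mass, as the statement stipulates.

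The key step is the law of total probability applied within the fibre $\{U=u\}$:
\[
\Pr[A=1\mid U=u]=\Pr[R=1\mid U=u]\,\Pr[A=1\mid U=u,R=1]+\Pr[R=0\mid U=u]\,\Pr[A=1\mid U=u,R=0].
\]
Equivalently, $\mathbb{E}[A\mid U]=\mathbb{E}\bigl[\mathbb{E}[A\mid U,R]\mid U\bigr]$, where $\mathbb{E}[A\mid U,R]=R\,q_{\mathrm{s}}(U)+(1-R)\,q_{\mathrm{u}}(U)$ almost surely. Taking the conditional expectation of this expression given $U$ gives $w(U)q_{\mathrm{s}}(U)+(1-w(U))q_{\mathrm{u}}(U)$ almost surely, which is the displayed identity for almost every $u$. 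The step I expect to need the most care is the measure-theoretic bookkeeping. The elementary "at a fixed score" computation is literal only when $U$ has atoms, and in general the conditional rates are versions of Radon--Nikodym derivatives. For that reason I will state the identity as an equality of versions holding $P_U$-a.e., verified by integrating both sides against indicators $\mathbf{1}\{U\in B\}$ for Borel sets $B$. Each side then integrates to $\Pr[A=1,U\in B]$, since the right side splits into $\Pr[A=1,R=1,U\in B]+\Pr[A=1,R=0,U\in B]$.

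With the mixture identity in hand, the miscalibration statements are algebra. Subtracting $q_{\mathrm{u}}(u)=w(u)q_{\mathrm{u}}(u)+(1-w(u))q_{\mathrm{u}}(u)$ gives $m(u)-q_{\mathrm{u}}(u)=w(u)\bigl(q_{\mathrm{s}}(u)-q_{\mathrm{u}}(u)\bigr)$. Symmetrically, $m(u)-q_{\mathrm{s}}(u)=(1-w(u))\bigl(q_{\mathrm{u}}(u)-q_{\mathrm{s}}(u)\bigr)$. I will close by noting the consequence the surrounding text uses. Wherever $0<w(u)<1$ and $q_{\mathrm{s}}(u)\neq q_{\mathrm{u}}(u)$, the single map is miscalibrated on both regimes at once. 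So no single calibration map can be calibrated for each regime separately when the two error rates differ given the score.
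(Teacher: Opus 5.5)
Your proof is correct and takes the paper's route: the paper also splits the correct-verdict event by regime at a fixed score, then subtracts $q_{\mathrm{u}}(u)$ and appeals to symmetry for the \textsc{safe} regime. Your version adds the measure-theoretic bookkeeping (checking the identity as an equality of versions by integrating against $\mathbf{1}\{U\in B\}$) that the paper leaves implicit in its ``for almost every score'' qualifier.
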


\begin{proof}
At a fixed score, split the correct-verdict event by regime: the conditional
probability of a correct verdict is the average of the two regimes' agreement
rates, weighted by the regimes' shares, which is the first display. Subtracting
$q_{\mathrm{u}}(u)$ from both sides gives the \textsc{unsafe}-regime form, and
the regimes' roles are symmetric.
\end{proof}

Two consequences read directly off the identity. The ideal single map is
miscalibrated on both regimes wherever the two rates differ: it is between
them. Where the two rates differ and the regimes' shares are unequal, the
miscalibration is larger on the minority regime. Where one regime dominates a
score region ($w(u)$ near one), the map is nearly exact for that regime, and
its miscalibration on the minority regime is nearly the full gap. The identity does
not constrain the magnitude of the gap.
The gap vanishes at almost every score if and
only if correctness is conditionally independent of the verdict given the score.
Either the verdict adds no information about correctness beyond the score, or
no single map is calibrated to both regimes.

\subsection{The Monitor Note (Proposition~2)}\label{app:monitor}

Proposition~2 states the identity behind confidence-based performance
estimation (CBPE)~\citep{kivimaki2025} one predicted class at a time; the
verdict regime is the predicted class. Average-confidence estimators rest on the
same averaging step, and ATC~\citep{garg2022} applies a threshold to the same
confidence signal. When the probe is calibrated to current traffic, its mean
score over a verdict regime is a plug-in estimate of that regime's
agreement rate.

Fix a regime $y\in\{0,1\}$ with $\Pr[Y{=}y]>0$ and consider the items the
classifier routes there. Write $S\in[0,1]$ for the probe's calibrated score on
such an item: the regime's calibration map applied to the regime's
logistic-regression output. The score is read as an estimate of
$\Pr[A{=}1\mid Z,\,Y{=}y]$, the probability that the verdict is correct.

\begin{proposition}\label{prop:monitor}
Suppose the score is calibrated to current traffic within the regime: with all
probabilities taken over the current traffic distribution,
\begin{equation}\label{eq:calibration}
\Pr[A=1 \mid S,\, Y=y] \;=\; S \quad\text{almost surely.}
\end{equation}
Then
\[
\mathbb{E}[S \mid Y=y] \;=\; \Pr[A=1 \mid Y=y],
\]
the regime's agreement rate on current traffic.
\end{proposition}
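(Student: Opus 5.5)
The plan is the tower property applied within the regime. Conditioning on the event $\{Y=y\}$, which has positive probability, defines a probability measure on the items routed to regime $y$. Under that measure both $S$ and $A$ are bounded random variables, so every conditional expectation we write exists. The target identity then reads $\mathbb{E}[S\mid Y=y]=\mathbb{E}[A\mid Y=y]$, because $A$ is an indicator and $\Pr[A=1\mid Y=y]=\mathbb{E}[A\mid Y=y]$.

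The key steps, in order, are these:
\begin{enumerate}
\item Write the regime's agreement rate as an iterated expectation, conditioning first on the calibrated score:
\[
\Pr[A=1\mid Y=y]=\mathbb{E}\bigl[\,\mathbb{E}[A\mid S,\,Y=y]\;\big|\;Y=y\,\bigr].
\]
\item Identify the inner conditional expectation with $\Pr[A=1\mid S,\,Y=y]$.
\item Substitute the calibration hypothesis \eqref{eq:calibration}. It replaces the inner term by $S$ almost surely, and almost-sure equality is preserved under the outer expectation.
\item Read off $\mathbb{E}[S\mid Y=y]$.
\end{enumerate}

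No earlier result is needed, and Proposition~\ref{prop:misalignment} plays no role. The statement is the per-regime form of the averaging step behind CBPE.

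The only point needing care, and the closest thing to an obstacle, is the measure-theoretic reading of \eqref{eq:calibration}. The phrase "almost surely" must be with respect to the regime-conditional law of the current traffic, not the training or calibration distribution; this is exactly the "calibrated to current traffic" hypothesis. Also, $\Pr[A=1\mid S,Y=y]$ should be understood as a version of the conditional expectation of $A$ given $\sigma(S)$ under the measure $\Pr[\cdot\mid Y=y]$. I will state this interpretation explicitly, and note that $S$ being a measurable function of $Z$ does not matter for the argument. The tower property only uses conditioning on $S$.

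I will close with a remark that the identity transfers to the error rate, $\mathbb{E}[1-S\mid Y=y]=\Pr[A=0\mid Y=y]$. Weighting by $\Pr[Y=y]$ and summing over both regimes then recovers the adherence estimate used by the monitor.
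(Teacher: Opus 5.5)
Your proof is correct and follows the paper's own argument: the paper likewise reads the calibration condition as $S=\mathbb{E}[\mathbf{1}\{A{=}1\}\mid S,\,Y{=}y]$ almost surely and takes expectations conditional on $Y{=}y$ (the tower property), and your measure-theoretic remarks only make explicit what the paper leaves implicit. One small caveat on your closing remark: under calibration in both regimes the traffic-weighted sum does give adherence, but it is not literally what the monitor computes, since the deployed alarm reads per-regime event counts at the $0.5$ and $0.95$ boundaries rather than a mean score, and Remark~\ref{rem:functional} stresses that the identity certifies only the mean.
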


\begin{proof}
Since $A$ is binary, condition~\eqref{eq:calibration} states
$S=\mathbb{E}[\mathbf{1}\{A{=}1\}\mid S,\,Y{=}y]$ almost surely. Taking
expectations conditional on $Y{=}y$,
\begin{align*}
\mathbb{E}[S\mid Y{=}y]
&=\mathbb{E}\bigl[\mathbb{E}[\mathbf{1}\{A{=}1\}\mid S,\,Y{=}y]\;\big|\;Y{=}y\bigr]\\
&=\Pr[A{=}1\mid Y{=}y].
\end{align*}
The second equality holds because averaging the agreement rate at each score
over the regime's score distribution returns the regime's overall agreement
rate.
\end{proof}

The regime agreement rates are the components of adherence: adherence is
their traffic-weighted average, and the identity concerns one regime
at a time. Proposition~2's estimator is the window mean: over a window drawn
uniformly from the regime's traffic, the average score is an unbiased estimate
of the regime's mean score, and therefore of the regime's agreement rate whenever
condition~\eqref{eq:calibration} holds on that traffic. Drift erodes that
condition: a score whose map was fitted on one window's labeled data
need not
stay calibrated after the traffic shifts. The loop refits the probe on every
alarm, on a block composed at the alarm-time mixture and therefore
representative of the post-alarm traffic by construction. Between refits, calibration is an
assumption; the identity certifies the estimator within calibration, not
through drift.

\begin{remark}[The mean and the event counts]\label{rem:functional}
Proposition~\ref{prop:monitor} concerns the mean score. The deployed alarm
tracks two nested event counts on the calibrated score: crossings of the
flip boundary $\tau$, the verdicts RCV flips, and crossings of a wider
$0.95$ boundary. The mean and the rate of those flips
share one target. Under condition~\eqref{eq:calibration}, one minus the
mean is the regime's rate of incorrect verdicts. At the deployed threshold,
the flip rate estimates the same rate by counting the verdicts that the probe
assesses as more likely wrong than right. Each count replaces the item-level
probabilities with binary decisions at its boundary, so the identity certifies the
mean and only the mean. The identity justifies the scores that the alarm
reads, not the alarm's behavior. The alarm is a one-sided sequential test on
the counts. Its test thresholds are calibrated on drift-free streams to the
false-alarm allowance (Appendix~\ref{app:recipes}).
\end{remark}

\section{Steering: Extended Results}\label{app:steering-results}

\subsection{Human-Label Evaluation Grid}\label{app:eval-swap}

We re-score the six classifier--dataset cells against the datasets' human
labels in place of $Y^\star$, with no refit and no threshold change.
Corrections are judged by agreement with those labels and not only with
$Y^\star$; the calibrated scores and corrected verdicts are functions of the
representation and of the per-regime probe, which was trained on $Y^\star$'s
labels. A correction that matched
$Y^\star$ on every scored item would score, on the human axis, the raw
agreement between $Y^\star$ and the human labels: $0.9195$ on PKU-SafeRLHF and
$0.9298$ on WildGuardMix ($\kappa=0.838$ and $0.766$). The movement count below
measures distance to this agreement level, computed on each seed's own evaluation
split, where it varies by $\pm0.2$--$0.6$\,pp. Corrected adherence moves toward the level in $48$ of the
$50$ seeds of the five cells that start a measurable distance from it (cell
means $1.8$--$10.5$\,pp), and the distance shrinks under correction in each
(paired $p\le 0.042$;
Table~\ref{tab:eval-swap}). The sixth cell, WildGuard on PKU-SafeRLHF, starts
within $0.7$\,pp of the level on every seed and ends within $1.4$\,pp on every
seed; its ten-seed mean distance to the level is $0.4$\,pp before correction and
$0.7$\,pp after, and the mean crosses from below the level to above it.
The flips do not concentrate on the items where $Y^\star$ diverges from
the annotators: in every cell, such items are a smaller share of the
flipped items than of all verdict--$Y^\star$ disagreements, by $6.5$ to
$29.1$\,pp.
On the human axis, the caught-share changes of the two generative
classifiers on WildGuardMix (Table~\ref{tab:eval-swap}), whose caught
populations average $70$ and $24$ items per seed, are not distinguishable from seed noise
($p=0.10$, $0.99$).

\emph{Derivation.} A fixed script re-derives every cell from the
per-seed steering records; if any $Y^\star$-axis value fails to reproduce
Table~1 of the main text to $|\Delta|=0$, it aborts and does not emit the grid.
Every paired $p$ in this subsection comes from a two-sided paired $t$-test
across the
ten evaluation seeds.
The level is a function of the two label sets alone and is fixed before
any correction runs; the movement count was specified after the per-cell
directions of movement (four up, two down; Table~\ref{tab:eval-swap})
were observed.

\begin{table*}[t]
\centering\small
\setlength{\tabcolsep}{6pt}
\begin{tabular}{@{}lll@{}}
\toprule
Classifier & Caught share ($Y^\star{\to}$human) & Adherence, human axis (raw$\to$RCV) \\
\midrule
\multicolumn{3}{@{}l}{\emph{PKU-SafeRLHF}} \\
Llama-Guard-3 & $0.703\to0.754$ $(+5.1)$ & $0.892\to0.922$ $(+3.1)$ \\
WildGuard & $0.383\to0.434$ $(+5.1)$ & $0.917\to0.923$ $(+0.7)$ \\
Beaver & $0.806\to0.845$ $(+3.8)$ & $0.813\to0.916$ $(+10.3)$ \\
\multicolumn{3}{@{}l}{\emph{WildGuardMix}} \\
Llama-Guard-3 & $0.346\to0.374$ $(+2.8)$ & $0.950\to0.946$ $(-0.3)$ \\
WildGuard & $0.291\to0.291$ $(-0.0)$ & $0.987\to0.957$ $(-2.9)$ \\
Beaver & $0.546\to0.568$ $(+2.3)$ & $0.903\to0.938$ $(+3.5)$ \\
\bottomrule
\end{tabular}
\caption{\textbf{Adherence against the datasets' human labels moves toward
$Y^\star$'s own agreement with them ($0.9195$ PKU-SafeRLHF / $0.9298$
WildGuardMix) in the five cells with a measurable gap: up from below, down
from above.} Each row is one deployed steering
cell of Table~1 of the main text (the classifier named in the Classifier
column). That cell is held fixed
and re-scored against the dataset's own human annotations.
Caught share (fraction of truly-unsafe passed items RCV flips to
\textsc{unsafe}): the $Y^\star$ value followed by the human-axis value, with the
change in percentage points. Adherence: raw$\to$RCV on the human axis.
Ten-seed means.}\label{tab:eval-swap}
\end{table*}

\subsection{Steering Toward the Added Clause}\label{app:alien}

Most of the clause's lift is the flip rule recovering missed items in
general. The third classifier is excluded because the added clause is already
inside its training. For each of the remaining two, the contrast compares
the flip rule's caught share on the clause's missed items with its caught share
on missed items of the in-training clauses. For Beaver the contrast is
null ($+0.0090$, $95\%$ $t$-CI $[-0.017, +0.035]$), and the clause's
apparent extra lift is mechanical: the clause starts from a larger blind-spot
gap, and applying the same caught share to a larger gap yields a larger gain.
For Llama-Guard-3 the contrast is mildly positive ($+0.0711$,
$[+0.052, +0.090]$), positive at $47$ of $47$ operating points including the
deployed one. The contrast does not replicate on WildGuardMix: Beaver is null
(mean $+0.020$, $t$-CI $[-0.022, +0.062]$; mean-positive at $20$ of $47$
operating points), and Llama-Guard-3 is non-positive at most points (mean
$-0.089$, $t$-CI $[-0.187, +0.009]$; mean-positive at $3$ of $47$). The positive
contrast is confined to Llama-Guard-3 on PKU-SafeRLHF.

\subsection{Routing Gain by Probe Family}\label{app:probe-family}

Per-regime routing exceeds its pooled counterpart in every cell
of the PKU-SafeRLHF routing ablation, by $+0.2$ to
$+4.6$\,pp, on a linear probe, on a multilayer perceptron, and on
gradient-boosted trees. The three families differ in capacity, yet
routing improves AUROC in all three. In these cells the routing gain
therefore comes from the verdict partition, not from probe capacity.

\subsection{The Confidence-Score Probe}\label{app:score-probe}

A probe routed on the classifier's own scalar confidence score in place of
its internal representation $Z$ recovers adherence to within $5.1$ points
of the deployed probe
but catches fewer of the missed-unsafe items (Table~\ref{tab:score-probe}).
The confidence score is the classifier's own
decision score: Beaver's continuous harm cost, and the two generative
classifiers' unsafe-token probability. Everything else is the steering setup of
Appendix~\ref{app:recipes}. The score enters the probe through a single
monotone standardization, which preserves its rank ordering, so the internal
state's higher caught share below is not an
artifact of the transform applied to the score.

The routed score probe's caught share of the missed-unsafe items is lower in all
six cells, by $10.1$ to $47.9$ points.

\begin{table*}[t]
\centering\small
\setlength{\tabcolsep}{6pt}
\begin{tabular}{@{}l cc@{}}
\toprule
Classifier & Caught share & Adherence \\
 & ($Z$\,/\,score) & ($Z$\,/\,score) \\
\midrule
\multicolumn{3}{@{}l}{\emph{PKU-SafeRLHF}} \\
Beaver & $0.806{\pm}0.019$ / $0.699{\pm}0.022$ & $0.920{\pm}0.002$ / $0.886{\pm}0.004$ \\
Llama-Guard-3 & $0.703{\pm}0.021$ / $0.224{\pm}0.026$ & $0.926{\pm}0.004$ / $0.876{\pm}0.005$ \\
WildGuard & $0.383{\pm}0.063$ / $0.059{\pm}0.025$ & $0.932{\pm}0.005$ / $0.904{\pm}0.005$ \\
\multicolumn{3}{@{}l}{\emph{WildGuardMix}} \\
Beaver & $0.546{\pm}0.029$ / $0.445{\pm}0.025$ & $0.933{\pm}0.003$ / $0.923{\pm}0.004$ \\
Llama-Guard-3 & $0.346{\pm}0.047$ / $0.099{\pm}0.032$ & $0.943{\pm}0.004$ / $0.923{\pm}0.005$ \\
WildGuard & $0.291{\pm}0.034$ / $0.031{\pm}0.016$ & $0.945{\pm}0.003$ / $0.932{\pm}0.003$ \\
\bottomrule
\end{tabular}
\caption{\textbf{The internal state has the higher caught share in all six
cells.} Per-regime steering with the probe
reading the classifier's confidence score, against the deployed probe reading its
internal representation $Z$. Both are routed on the verdict under the identical
per-regime protocol, with the same targets, probe family, calibration, splits,
and seeds. Each cell gives the two arms as
$Z$\,/\,score, scored against the policy oracle $Y^\star$. The
internal-state probe has the higher caught share in all six cells. The score
arm's adherence is at most $5.1$ points below the internal state's.
Ten-seed means $\pm$ s.d.; the caught share is the
fraction of truly-unsafe passed items RCV flips to
\textsc{unsafe}.}\label{tab:score-probe}
\end{table*}

\subsection{The Escalation Fine-Tune}\label{app:ft-alternative}
The main text reserves a fine-tune escalation for census episodes whose
repair does not pass the gate within the budget. The escalation leg separates
what the weight update carries from what the refitted probe carries.
The escalation leg runs once, on one episode of the data-bound census of
Appendix~\ref{app:capacity-family} (disinformation family, seed $8675309$),
whose thirteen audits exhausted the
$3{,}720$ fresh labels its corpus could supply while the held-out gate
reading ended at $0.7255$, the same value as at the first audit (readings vary
within $\pm4$\,pp).
The classifier is fine-tuned by LoRA on $480$ of the labels the
episode's audits had already obtained, drawn at the alarm-time mixture
($r = 0.1372$); $120$ further labels are held aside as the fine-tune's
validation set, and five trainer seeds ($42, 123, 456, 789, 1024$) repeat the
update from identical data.
The probes in cells B and D of Table~\ref{tab:ft-cells} are refit from
the deployment's probe-training and calibration slices
(Appendix~\ref{app:recipes}) plus the same $480$ items, half assigned to
fitting and half to calibration; B reads the base representation and D the
fine-tuned one. Between B and D the only changes are the classifier's
weights and the representation the probe reads.

\begin{table*}[t]
\centering\small
\setlength{\tabcolsep}{2pt}
\begin{tabular}{@{}l l l l@{}}
\toprule
Cell & System & Recall & Over-block \\
\midrule
A & base classifier & $0.7625$ & $0.0467$ \\
B & base $+$ probe (audit-half) & $0.7861$ & $0.0321$ \\
C & fine-tune, raw & $0.8254 \pm 0.0192$ & $0.0748 \pm 0.0076$ \\
D & fine-tune $+$ probe (audit-half) & $0.7914 \pm 0.0008$ & $0.0313 \pm 0.0013$ \\
\bottomrule
\end{tabular}
\caption{\textbf{The weight update raises the classifier's own recall;
the raw fine-tune also raises the over-block.} The four cells of the escalation fine-tune
on an episode that does not pass the gate at any audit size. All four
systems score the same $3{,}420$-item
held-out pool ($678$ unsafe) at the alarm-time mixture. Cells A and B do
not depend on the trainer seed; cells C and D are means $\pm$ s.d.\ over
the five trainer seeds. The fine-tuned classifier's own recall rises by
$6.3 \pm 1.9$\,pp over the base classifier (C against A). The acceptance
verdicts are in Table~\ref{tab:ft-rearm}: on the powered set, the raw
fine-tune exceeds the over-block ceiling in four of five seeds; fine-tune
plus probe passes in
five of five.}\label{tab:ft-cells}
\end{table*}

After the fine-tune the probe is re-armed on a single fresh $300$-label
audit, drawn from the episode's already-obtained labels ($150$
fitting items, $150$ calibration items), and on the
probe-training and calibration slices, re-read under the fine-tuned
representation; the fine-tuned classifier plus re-armed probe is then
validated against the acceptance standard the episode
failed: the pre-drift reading, taken at deployment start (recall
$0.8105$, over-block $0.0192$), with the $5$\,pp tolerance on each axis.
The base comparator in Table~\ref{tab:ft-rearm} receives the same
re-arm construction at the base representation. The powered set is the
$3{,}420$-item held-out pool excluding the fresh audit and the gate
block, the $2{,}820$ items of Table~\ref{tab:ft-rearm}.
On the powered set (Table~\ref{tab:ft-rearm}), the raw fine-tune
exceeds the over-block ceiling in four of five seeds, and fine-tune
plus probe passes in five of five at recall $0.7920 \pm 0.0040$ and
over-block $0.0325$ (the main text rounds this to $0.033$). On the deployed
$300$-label gate block, base plus probe reproduces the episode's
recorded failure and fine-tune plus probe passes in four of five seeds;
the fifth (trainer seed $456$) falls short by one unsafe item. Across
trainer seeds the powered-set recall of fine-tune plus probe has
s.d.\ $0.004$, where the raw fine-tune's recall spans $0.049$
(s.d.\ $0.019$).

\begin{table*}[t]
\centering\small
\begin{tabular}{@{}l l l l l@{}}
\toprule
System & Recall & 95\% CI & Over-block & Verdict \\
\midrule
\multicolumn{5}{@{}l}{\emph{Powered evaluation set ($2{,}820$ items, $573$ unsafe)}} \\
base classifier & $0.7644$ & $[0.728, 0.797]$ & $0.0467$ & pass \\
base $+$ probe & $0.7836$ & $[0.748, 0.815]$ & $0.0325$ & pass \\
fine-tune, raw (5 seeds) & $0.8028$--$0.8517$ & --- & $0.0654$--$0.0846$ & fail $4/5$ (over-block) \\
fine-tune $+$ probe (5 seeds) & $0.7920 \pm 0.0040$ & --- & $0.0325$ & pass $5/5$ \\
\midrule
\multicolumn{5}{@{}l}{\emph{Deployed gate block ($300$ items, $47$ unsafe; $2.13$\,pp per unsafe item)}} \\
base classifier & $0.7234$ & $[0.582, 0.831]$ & $0.0316$ & fail \\
base $+$ probe & $0.7447$ & $[0.605, 0.847]$ & $0.0119$ & fail \\
fine-tune, raw (5 seeds) & $0.7447$--$0.8085$ & --- & $0.0474$--$0.0553$ & pass $4/5$ \\
fine-tune $+$ probe (5 seeds) & $0.7660 \pm 0.0150$ & --- & $0.0150$ & pass $4/5$ \\
\bottomrule
\end{tabular}
\caption{\textbf{On the powered set the raw fine-tune exceeds the over-block ceiling in four of five
seeds; after the re-arm on
one fresh $300$-label audit, fine-tune plus probe meets the
acceptance standard (recall floor, over-block ceiling).} Each row is one system; the Verdict column reports pass or fail
against the acceptance standard. Both test sets sit at the alarm-time
mixture and are disjoint from every fitting set. The acceptance standard is recall
$\geq 0.7605$ and over-block $\leq 0.0692$; it derives from the
pre-drift reading with the $5$\,pp tolerance on each axis. Multi-seed
rows report the range or the mean $\pm$ s.d.\ over the five trainer
seeds.}\label{tab:ft-rearm}
\end{table*}

Base plus probe also passes on the powered set: the episode's recorded
failure is a granularity effect of the $300$-label gate block, so this
experiment demonstrates that the deployed-budget gate outcome reverses
on the powered set; it does not establish that the fine-tune was
necessary for this
episode.
On the $1{,}290$-item retention slice (outside the episode's labels and
never entering the stream), $4$ to $8$
items flip from correct to incorrect and $3$ to $8$ flip the other way
per trainer seed, a net change
between $-4$ and $+1$ items, at most $0.4\%$ of the slice.

\subsection{The Text-Surface Control}\label{app:text-surface}
A published critique of hidden-state probing holds that probes for malicious-input detection
learn superficial patterns and that simple $n$-gram baselines perform
comparably~\citep{wang2025}. Our probes target a different quantity, the classifier's agreement
with the deployer's oracle, so we adapt the critique's $n$-gram surface features to that
target and do not reproduce its input-harmfulness result. TF-IDF features of the prompt--response text
(word 1--2-grams and word-boundary-padded character 3--5-grams, $50{,}000$ each, sublinear term
frequencies, fit per cell and seed on the training split only) enter the identical per-regime
protocol, with the same targets, splits, seeds, calibration, and flip threshold as the deployed
probe (the sparse features enter unscaled: centring a sparse matrix makes it dense, and the rows
are already $L^2$-normalised; regularisation strength is fixed at the class default $C = 1.0$ and
is not selected per regime on a held-out fold over the deployed grid, whose largest value is $0.1$).
Logistic regression on these features has a convex objective, so the
fit is the optimum of its penalised objective. The
text side is not information-poor: the
pair's text is the classifier's own input.
The surface control's caught share of missed unsafe items,
scored against the policy oracle $Y^\star$, is lower than the
internal-state probe's in all six
classifier--dataset cells:
it reaches $0.066$--$0.558$ where the internal state reaches $0.291$--$0.806$, and
the smallest shortfall is $21.6$ points.
Had the surface control matched the probe, the
critique would transfer and this evidence for reading the
classifier's internal state would be gone.
Per-cell values for all three probe inputs (internal state, confidence
score, text surface) appear in the supplementary code and data.

\section{Maintenance: Extended Results}\label{app:maintenance-results}

This part reports extended maintenance results:
the census by family (\ref{app:census-family}), repair capacity
(\ref{app:capacity-family}), the five chain records
(\ref{app:chain-records}), and the comparison against off-the-shelf
drift monitors under a matched false-alarm allowance
(\ref{app:battery-floor}). All four run on the WildGuardMix deployment,
its traffic classified by Llama-Guard-3.

\subsection{Census Outcomes by Family}\label{app:census-family}

Table~\ref{tab:census-family} breaks the $100$-episode census
by held-out family under the deployed budget ($300$ labels per attempt,
four attempts, a held-out $300$-label gate block). It reports each family's
escalation count and whether repair-in-place holds
across families. Episodes that
escalated started from lower pre-repair recall: the pre-repair probe's
median recall on the episode's own gate block is $0.706$ where the
episode escalated and $0.792$ where repair succeeded.

\begin{table}[t]
\centering\small
\setlength{\tabcolsep}{3pt}
\begin{tabular}{@{}lccccc@{}}
\toprule
Family & Repaired & First audit & Escalated & \multicolumn{2}{c}{Label cost} \\
 & /10 & /10 & /10 & median & max \\
\midrule
cyber        & 9  & 9 & 1 & 600   & 1{,}500 \\
\rowcolor{gray!10}
defamation   & 9  & 7 & 1 & 600   & 1{,}500 \\
disinformation & 6  & 4 & 4 & 1{,}200 & 1{,}500 \\
\rowcolor{gray!10}
fraud        & 10 & 9 & 0 & 600   & 900 \\
material harm & 9  & 8 & 1 & 600   & 1{,}500 \\
\rowcolor{gray!10}
mental health & 8  & 5 & 2 & 750   & 1{,}500 \\
sensitive info & 6  & 5 & 4 & 1{,}050 & 1{,}500 \\
\rowcolor{gray!10}
stereotypes  & 7  & 5 & 3 & 750   & 1{,}500 \\
toxic        & 7  & 5 & 3 & 1{,}050 & 1{,}500 \\
\rowcolor{gray!10}
violence     & 8  & 7 & 2 & 600   & 1{,}500 \\
\midrule
all          & 79 & 64 & 21 & 600 & 1{,}500 \\
\bottomrule
\end{tabular}
\caption{\textbf{In every family, most episodes repair in place, and most
repairs occur on the first audit.} The $100$-episode census by held-out family under
the deployed budget: episodes repaired in place, first-audit repairs, and
escalations to the fine-tune, each a count of the family's ten episodes;
the label cost per episode is the median and maximum over all ten,
escalations included. Every escalated episode costs the full $1{,}500$
labels. Ten seeds per family.}
\label{tab:census-family}
\end{table}

\subsection{Repair Capacity by Family}\label{app:capacity-family}

Table~\ref{tab:capacity-family} repeats the census with the audit bounded
by data instead of by budget: the audit grows in fresh material until the
corpus is exhausted.
The two designs draw different audit segments, so an episode that narrowly
passes the gate under one design can fail it under the other.
$87$ of the $100$ episodes repair and $13$ episodes
do not pass the gate at any audit size. Under the data bound, $11$ of
the $21$ budget escalations repair and $10$ do not, so for most of the eleven the
shortage was of labels, not of repair capacity. Of the $79$ episodes that
repair under the budget, $76$ repair under the data bound and $3$ do
not.

\begin{table}[t]
\centering\small
\setlength{\tabcolsep}{1pt}
\begin{tabular}{@{}lcccc@{}}
\toprule
Family & Repaired & Above budget & Never pass & Bounds (labels) \\
 & /10 & /10 & gate /10 & of those \\
\midrule
cyber        & 10 & 1 & 0 & --- \\
\rowcolor{gray!10}
defamation   & 10 & 0 & 0 & --- \\
disinformation & 7  & 0 & 3 & 3{,}610--3{,}720 \\
\rowcolor{gray!10}
fraud        & 10 & 0 & 0 & --- \\
material harm & 10 & 1 & 0 & --- \\
\rowcolor{gray!10}
mental health & 7  & 0 & 3 & 1{,}817--2{,}245 \\
sensitive info & 10 & 3 & 0 & --- \\
\rowcolor{gray!10}
stereotypes  & 9  & 2 & 1 & 3{,}618 \\
toxic        & 7  & 0 & 3 & 3{,}479--3{,}542 \\
\rowcolor{gray!10}
violence     & 7  & 0 & 3 & 3{,}457--3{,}654 \\
\midrule
all          & 87 & 7 & 13 & 1{,}817--3{,}720 \\
\bottomrule
\end{tabular}
\caption{\textbf{$11$ of the $21$ budget escalations repair under the data
bound; the rest never pass the gate.} The same $100$ episodes with the audit bounded
by data instead of budget: episodes repaired, repairs that exceed the
deployed budget of four $300$-label attempts, episodes that do not pass
the gate at any audit size (median held-out move $0.0$\,pp between
$300$ labels and the bound, range $-1.69$ to $+7.35$\,pp), and the realized data
bounds of those episodes, with a dash where a family has none. The three
mental-health episodes reach their bound at $1{,}817$--$2{,}245$ labels; the ten
others run past $3{,}457$.
Ten seeds per family.}
\label{tab:capacity-family}
\end{table}

\subsection{The Five Chain Records}\label{app:chain-records}

Table~\ref{tab:chain-records} gives the per-cycle record behind the main
text's consecutive-campaign chains. The chains test whether the loop withstands
campaigns on one deployment and what those campaigns cost in labels.
Each cycle's cost is computed as its
audit labels plus the episode's held-out $300$-label gate block, so a
first-audit repair costs $600$. An escalated terminal has exhausted the audit budget,
$300$ fresh labels per attempt over four attempts, plus its gate block:
$1{,}500$ labels. Cycle 1 is judged against the deployment-start
standard. After an accepted repair, the accepted candidate's gate
reading becomes the standard for the next cycle.

\begin{table*}[t]
\centering\small
\setlength{\tabcolsep}{1pt}
\begin{tabular}{@{}lllccc@{}}
\toprule
Chain (seed) & Cycles (family, label cost) & Terminal & Depth & Final attack share & Total labels \\
\midrule
42   & mental health 600; sensitive info 600; cyber 900; stereotypes 600 & fraud, no alarm & 4 & 0.544 & 2{,}700 \\
\rowcolor{gray!10}
789  & disinformation 600; fraud 600; violence 600 & material harm, no alarm & 3 & 0.425 & 1{,}800 \\
1024 & toxic 600; defamation 600; material harm 600 & violence escalated & 3 & 0.383 & 3{,}300 \\
\rowcolor{gray!10}
456  & material harm 900; sensitive info 600 & fraud escalated & 2 & 0.316 & 3{,}000 \\
123  & --- & stereotypes escalated & 0 & 0.000 & 1{,}500 \\
\bottomrule
\end{tabular}
\caption{\textbf{The five chain records.} Each chain sequences campaigns
on one deployment: the Cycles column lists accepted repairs in cycle order
with their label costs; the remaining columns give the terminal event, the
chain depth (accepted repairs), the attack share of the chain's final
traffic, and the chain's total label cost. An escalated terminal
costs the full $1{,}500$ labels; a no-alarm terminal triggers no audit and
costs none.}
\label{tab:chain-records}
\end{table*}

\subsection{Off-the-Shelf Drift Monitors Under a Matched Allowance}\label{app:battery-floor}

We check whether the deployed monitor is competitive with
off-the-shelf drift monitors under a matched false-alarm allowance of
$0.30$ per stream.
We compare it against score-KS and ATC on $100$ constructed
attack streams (ten held-out harm families, ten seeds) and $90$
seed-instantiations of the shared drift-free null stream
(Table~\ref{tab:battery-floor}). Every monitor is
calibrated on the same drift-free material. Neither baseline ships as a
sequential monitor for this setting, so both enter the table in adapted
form. For ATC we count items below its source-calibrated threshold as
events and run the same sequential accumulation. For score-KS the
windowing configuration is ours.
ATC alarms on $90/90$ drift-free instantiations. The low-confidence
share it monitors ran $0.061$ on its calibration prefix and $0.073$ on
the live null traffic. The replay-calibrated threshold does not cover an
offset of that size.
The $0.05$ allowance is the
conventional level for a stand-alone test, not one of this paper's
design constants. At that level, ATC with its threshold set from the
point estimate of its reference rate exceeds the allowance ($87/90$).
With the threshold set by the same replay design, the realized rate is
$8/90$, consistent with that allowance at this sample size. At the
matched allowance both threshold treatments alarm on every
instantiation. Every event-based monitor runs the deployed alarm
implementation itself, and its output is verified against the deployed
monitor's accumulation. Only the monitored statistic
differs. Score-KS is windowed, not event-based, and shares only the
threshold-calibration design.

\begin{table}[t]
\centering\small
\setlength{\tabcolsep}{2pt}
\begin{tabular}{@{}lccc@{}}
\toprule
Monitor & Detections & First-alarm rate & Null alarms \\
 & /100 & median [IQR] & /90 \\
\midrule
score-KS & 100 & 0.134 [0.134, 0.174] & 0 \\
\rowcolor{gray!10}
ATC & --- & --- & 90 \\
deployed monitor & 100 & 0.115 [0.094, 0.137] & 14 \\
\bottomrule
\end{tabular}
\caption{\textbf{The deployed monitor and score-KS stay within the matched
allowance;
ATC does not.} Each row is one monitor: detections count alarmed attack
streams of the $100$; the first-alarm rate is the attack rate at the first
alarm (lower is earlier), median [interquartile range] over the $100$
streams (ten seeds
per held-out family). Thresholds are read from replayed drift-free streams at a
$0.30$ per-stream false-alarm allowance (Appendix~\ref{app:recipes} gives the
deployed monitor's design);
null alarms are counted on the shared $90$ drift-free null streams. ATC's
row reports no detection reading because it alarms on $90/90$ drift-free
streams.}
\label{tab:battery-floor}
\end{table}

\section{Reproducibility}\label{app:repro}

\subsection{Recipes, Seeds, and Splits}\label{app:recipes}

\paragraph{Extraction recipes.} Each classifier emits its verdict on the full
(prompt, response) pair. Only Llama-Guard-3 reads $Z$ from the verdict's own
forward pass; the other two reads are two-pass (below).
\begin{itemize}\itemsep2pt
\item \emph{Llama-Guard-3-8B} ($d{=}4{,}096$): the chat template over the user
prompt and assistant response with the generation prompt appended. $Z$ is the
final-layer decision-token hidden state; the verdict is the softmax over the
safe and unsafe tokens thresholded at $0.5$. Re-extraction under this recipe
reproduces the $Z$ used here at cosine $0.99996$.
\item \emph{WildGuard-7B} ($d{=}4{,}096$): the model's three-field template.
The read is two-pass: pass one freely generates the assessment, and pass two
teacher-forces on the model's own pass-one tokens to re-read the decision-token
hidden state. Greedy decoding makes the pass-two $Z$ faithful to the
free-generation state. $Z$ is the final-layer harmful-response decision token;
the verdict is the harmful-response head. Re-extraction under this recipe
reproduces the $Z$ used here at cosine $0.99997$.
\item \emph{Beaver} ($d{=}5{,}120$): the pair is wrapped in a
\texttt{<prompt>}\,{\ldots}\,\texttt{</prompt>} / \texttt{<response>}\,{\ldots}\,\texttt{</response>}
envelope inside a placeholder assistant turn. $Z$ is the final hidden state at
the last non-pad token (maximum length 512); the verdict thresholds the scalar
harm cost at $\ge 3.0$. Re-extraction under this recipe reproduces the $Z$
used here at cosine $0.99995$.
\end{itemize}

\paragraph{Seeds and splits.} The steering results aggregate ten fixed seeds,
$\{42, 123, 456,$ $789, 1024, 7,$ $2026, 31415,$ $271828, 8675309\}$; the maintenance
census and the monitor comparison aggregate all ten, and the chain and
fine-tune experiments use five. The PKU-SafeRLHF traffic pool is the dataset's
test split, so the cells whose classifier was trained on this corpus are measured
on items held out from that training. Steering reports the mean over seeds and the standard
deviation across them; confidence intervals reported elsewhere are Wilson
score $95\%$ intervals. Steering uses a group-blocked split by prompt,
$45/35/20$ train/calibration/evaluation, with every group (all responses to one
prompt) assigned atomically to a single split. The evaluation split is drawn
first at $20\%$; the configured calibration fraction $0.4375$ applies to the
remainder, yielding the $35\%$ quoted here. Groups whose rows share one
combination of verdict and agreement are allocated within that
combination. Groups whose rows mix combinations form one further pool
and are allocated by the same procedure.
The per-regime logistic-regression probe is fit on the train split; on the
calibration split, each regime's regularisation strength is selected by held-out
AUROC and the calibration map is then fit.\footnote{The calibration map is
a sigmoid from the probe's logit to the probability of agreement, fitted as
L2-regularised logistic regression at $C{=}1$ on the calibration split's hard
labels. Platt's original formulation is unregularised and smooths its targets;
at our calibration-split sizes the two fits are indistinguishable on held-out
log-loss and Brier, and differ by $3\%$ of level in expected calibration error;
the pipeline figure's ``+Platt cal.'' label (Fig.~2 of the main text)
refers to this map. A calibration split
falling below roughly one hundred rows, or below fifteen minority cases, would
reopen the choice.}

\paragraph{Stream composition.}
\begin{itemize}\itemsep2pt
\item EVAL is the $1{,}709$ WildGuardTest items that include a response;
prompt-only rows are excluded because the unit of analysis is a
prompt--response pair. POOL is about
$10{,}000$ WildGuardTrain items with responses, stratified by adversarial flag
and subcategory, with the drift families excluded.
\item The WildGuardMix steering cells are measured on EVAL and POOL together,
$11{,}708$ items, carved per seed by the split rule above. POOL is
WildGuardTrain, so the WildGuard cells on this corpus are measured partly on
items the classifier was trained on. Any advantage the classifier holds on items it
was trained on reduces its misses there, and with them the probe's room
for correction. The size of that reduction is not measured. The WildGuard rows of
Table~1 of the main text are measured under that reduced headroom, and the other
two classifiers were not trained on this pool.
The probe fitted here is not reused in
the maintenance experiments, which draw from POOL alone and fit their own
probes on their own slice of it; EVAL enters no maintenance training fold.
\item The pool is carved per seed into probe-training, calibration, retention,
and traffic slices, recorded in deterministic manifests; the stream-assembly
script asserts pairwise disjointness.
\item A drift stream is base traffic from the traffic slice plus one campaign
family, whose rate rises linearly to a cap of $0.30$.
\item The ten held-out campaign families are WildGuardMix's own adversarial
subcategories, excluded from the initial probe-training and calibration
splits. In the WildGuardMix deployment, the first campaign's family is
pre-registered by pool size, the largest subcategory.
\item The retention slice is never streamed and never trained on, and audited
items are never re-audited.
\end{itemize}

\paragraph{Monitor mechanics and parameters.} Each regime's sequential test
watches two event counters, placed at score values $0.5$ and
$0.95$. The $0.5$ counter sits on the
decision surface, so its events are the flips themselves; the wider $0.95$
counter supplies the sensitivity.
A counter's events are the items whose score falls below its boundary; its
reference event rate is counted on the drift-free run-in and carries a
posterior for its estimation uncertainty. The rate increase the
test is designed to detect, the indifference zone, is the smallest whose break-even
rate sits at quantile $0.99$ of that posterior. Each counter accumulates a
sequential log-likelihood
ratio over its events, one-sided for an increase and measured from its
running minimum, so periods at the reference rate drain it; the monitor
alarms when any counter's accumulation crosses its threshold, and the four
thresholds, two per regime, are searched jointly.
Thresholds are fixed in advance of the monitored horizon by replaying the
run-in: the reference is
estimated on the first $2{,}000$ items of the deployed stream, and $4{,}000$
replay streams of the $19{,}000$-item horizon are drawn from it. Calibration
nests over posterior draws: one reference draw per ten replay streams,
and a threshold set is accepted when the fraction of draws showing any
false alarm is within the allowance. A draw shows a false alarm when
any of its ten replay streams alarms. The resulting guarantee reads: with
confidence $q=0.70$ over run-in estimation, the monitor raises no false
alarm over the $19{,}000$-item horizon on a drift-free stream; the
complementary $0.30$ is the false-alarm allowance. Three of these values, the
$0.95$ boundary, the confidence $q$, and the indifference-zone quantile, were
selected on measured curves, on a held-out development family that is excluded
from the ten reported attack families. The $0.95$ boundary is on a
plateau: detection at $0.97$ is identical to three decimals, $0.99$ detects
$28\%$ later, and $0.90$ exceeds the allowance. The confidence $q$ is the last
value at which unused allowance still converts into detection. The
indifference-zone quantile is at the knee of its measured curve.
The false-alarm evaluation reuses the development family's null stream. On the
$60$ seed instantiations not used during selection, the rate is $10$ of $60$
(Clopper--Pearson $95\%$ interval $[0.083, 0.285]$, whose upper bound is below
the $0.30$ allowance), consistent with the reported $14$ of $90$. The $90$
readings are seed instantiations of one composed stream, not independent
samples of traffic.

\paragraph{Loop constants.} Table~\ref{tab:constants} lists the design
constants, fixed before the maintenance experiments. Each labeled
audit item keeps a fixed assignment to the update's training or calibration
material. Every candidate update, and the pre-repair probe beside it, is
scored on the held-out $300$-label gate block. The audit grows from $300$
to $1{,}200$ labels, and the gate block's $300$ labels are additional.

\begin{table*}[t]
\centering\small
\setlength{\tabcolsep}{6pt}
\begin{tabular}{@{}ll@{}}
\toprule
Design constant & Value \\
\midrule
Monitor event-counter boundaries (score values, per regime) & $0.5$, $0.95$ \\
\rowcolor{gray!10}
No-false-alarm confidence $q$ / false-alarm allowance & $0.70$ / $0.30$ \\
Calibration criterion & nested posterior draws ($10$ streams per draw) \\
\rowcolor{gray!10}
Run-in / replays / horizon & $2{,}000$ items / $4{,}000$ streams / $19{,}000$ items \\
Indifference-zone quantile & $0.99$ \\
\rowcolor{gray!10}
Attack-rate cap (linear ramp) & $0.30$ \\
Audit budget (labels per attempt) / attempts per episode & $300$ labels / at most $4$ \\
\rowcolor{gray!10}
Monitor observation chunk & $2{,}000$ items \\
Gate block (held out) & $300$ labels \\
\rowcolor{gray!10}
Acceptance tolerance (recall floor, over-block ceiling) & $5$\,pp each vs.\ the standing standard (the pre-drift reading at cycle 1) \\
\bottomrule
\end{tabular}
\caption{\textbf{Design constants of the monitor and maintenance loop} and
their values, fixed before the maintenance experiments. Each post-alarm audit is
a prefix of a segment composed at the alarm-time
mixture.}\label{tab:constants}
\end{table*}

\begin{table*}[t]
\centering\small
\setlength{\tabcolsep}{6pt}
\begin{tabular}{@{}lll@{}}
\toprule
Stage & Compute & Hardware \\
\midrule
Policy-oracle labeling (PKU-SafeRLHF, one-time) & one run, $16{,}422$ items & GPT-5-Nano API \\
\rowcolor{gray!10}
Policy-oracle labeling (WildGuardMix, one-time) & one run, $11{,}708$ items & GPT-5-Nano API \\
Representation extraction (three classifiers)   & ${\approx}1.5$ h  & L40S GPU \\
\rowcolor{gray!10}
One fine-tune ($480$ items, one epoch)          & ${\approx}6$ min & A100 80GB GPU \\
Maintenance program (GPU stages)                & a few GPU-hours   & L40S + A100 \\
\rowcolor{gray!10}
Steering, added clause, monitoring                      & ---               & CPU only \\
\bottomrule
\end{tabular}
\caption{\textbf{Per-stage compute.} Representation extraction runs on a single
L40S GPU and the escalation fine-tune on a single A100 80GB GPU; the recurring
steering, added-clause, and monitoring compute is CPU-only. Running times are
single-run wall-clock on the stated hardware except the maintenance-program row, which
aggregates the extraction and fine-tune stages across five
seeds.}\label{tab:compute}
\end{table*}

\subsection{The Judge Prompt and Agreement with Human Labels}\label{app:judge}

The deployer's policy oracle $Y^\star$ is a single LLM judge, GPT-5-Nano
(\texttt{gpt-5-nano-2025-08-07}), applied to each (prompt, response) pair with
the policy rubric as its system prompt and the response format set to a
JSON object. The request sets the sampling temperature to $0$, but the model
fixes its temperature at $1$ and accepts the parameter without applying it, so the labels
are a stored label map: re-running the pipeline re-reads the stored judge
outputs and does not re-query the judge. Agreement with the datasets' own
human labels is Cohen's $\kappa$ on
the binary safe/unsafe decision: $\kappa=0.838$ on PKU-SafeRLHF (raw agreement
$0.919$, $n{=}16{,}422$) and $\kappa=0.766$ on WildGuardMix (raw agreement
$0.930$, $n{=}11{,}708$).

The policy rubric is a labeling specification, not harmful content; it
ships verbatim in the supplementary code and data package as
\texttt{judge\_prompt.txt}.
It states the two unsafe tests
(baseline harmful content and the deployer's commitment-to-assist
clause) and the strict JSON output schema, whose \texttt{rule\_fired} field
names the rule that fired. Clause membership in Appendix~B.2 comes from
this field: an item belongs to the added clause when the
commitment-to-assist rule fired on it.

\subsection{Compute}\label{app:compute}

Most alarms are handled by a probe fit and leave the classifier unchanged; a
fine-tune is reserved for the episodes that do not pass the gate, $21$ of the
$100$ in the census. Table~\ref{tab:compute} lists the per-stage compute.
Representation extraction and the fine-tune are the two GPU stages; the
per-regime probe fit, the calibration fit, the flip-rule scoring, and the
monitor replays run on CPU.
The GPU stages run on single-GPU cloud instances from a pinned container
image, PyTorch~2.4.0 on Python~3.11, CUDA~12.4.1, Ubuntu~22.04, with at
least $8$ vCPUs and $32$\,GB of system memory; task-level packages are
version-pinned by the pod-setup scripts. The CPU stages run
single-machine, and every loop and steering run records its environment,
Python~3.12.3 with NumPy~2.5.1, SciPy~1.18.0, and scikit-learn~1.9.0,
together with SHA-256 digests of the code, the configuration, and the
input data file, in its result's provenance block. The remaining
reproducibility checks are the code-digest match above and the
extraction-recipe cosine checks of Appendix~\ref{app:recipes}.

\end{document}